\newtheorem{theorem}{Theorem}
\newtheorem{lemma}[theorem]{Lemma}
\title{Continual learning with the neural tangent ensemble}
\author{%
  Ari S.~Benjamin \qquad Christian Pehle \qquad Kyle Daruwalla \\
Cold Spring Harbor Laboratory \\
Cold Spring Harbor, NY 11724\\
\texttt{\{benjami,pehle,daruwal\}@cshl.edu}\\
}
\begin{document}

\maketitle

\begin{abstract}

A natural strategy for continual learning is to weigh a Bayesian ensemble of fixed functions. This suggests that if a (single) neural network could be interpreted as an ensemble, one could design effective algorithms that learn without forgetting. To realize this possibility, we observe that a neural network classifier with N parameters can be interpreted as a weighted ensemble of N classifiers, and that in the lazy regime limit these classifiers are fixed throughout learning. We call these classifiers the \textit{neural tangent experts} and show they output valid probability distributions over the labels. We then derive the likelihood and posterior probability of each expert given past data. Surprisingly, the posterior updates for these experts are equivalent to a scaled and projected form of stochastic gradient descent (SGD) over the network weights. Away from the lazy regime, networks can be seen as ensembles of adaptive experts which improve over time. These results offer a new interpretation of neural networks as Bayesian ensembles of experts, providing a principled framework for understanding and mitigating catastrophic forgetting in continual learning settings.
  
\end{abstract}

\section{Introduction}

Neural networks often forget previous knowledge when trained with gradient descent. In contrast, animals learn from sequential experiences, suggesting that true `lifelong learners' use different strategies for learning \cite{kudithipudi_biological_2022}.

One  strategy to learns without forgetting is to update the posterior distribution over a set of fixed probabilistic models \cite{cripps_divisible_2018}. This includes any fully Bayesian model, such as Bayesian linear regression. The fundamental reason why these algorithms do not forget information is because the posterior over models is invariant to data sequence. Given two permutations of the data, the posterior will be the same. This property of posteriors has inspired many strategies to reduce forgetting by approximating the posterior distribution over neural networks \cite{kirkpatrick_overcoming_2017, kong_overcoming_2023, farquhar_unifying_2019, li_continual_2020, kurle_continual_2019, ritter_online_2018, nguyen_variational_2018}. However, these approximations introduce many new parameters and considerable memory overhead. In general, estimating the full posterior distribution over high-dimensional networks is prohibitive. 

Here, we shift our perspective and instead interpret a \textit{single} neural network as an ensemble of many experts. This allows tracking a posterior (over experts, instead of networks) without introducing any memory overhead besides the network itself.

This motivates our main result, which we note is generally applicable outside of continual learning. More specifically, we show that \textbf{neural network classifiers perturbed by a small vector in parameter space can be described as a weighted ensemble of valid classifiers} outputting a probability distribution over labels. We call this the Neural Tangent Ensemble (NTE). Inspired by the Neural Tangent Kernel, this result depends on a first-order Taylor expansion around a seed point \cite{jacot_neural_2018}. As a consequence, it operates as an ensemble of \textit{fixed} classifiers in the NTK limit of infinite width.

In this framework, learning is framed as Bayesian posterior updating rather than optimization. These two approaches might be expected to be quite different, as a posterior update is multiplicative whereas gradient-based optimization is additive. Surprisingly, however, we find that the NTE posterior update rule is approximately stochastic gradient descent (SGD) on the network with batch size 1, thus shedding new light on the dynamics of neural network optimization.

Our primary contributions are:
\begin{itemize}
    \item We introduce the Neural Tangent Ensemble (NTE), a novel formulation that interprets networks as ensembles of classifiers, with each parameter contributing one classifier.
    \item We derive the posterior update rule for the NTE for networks in the lazy regime, in which experts are fixed, and show that it is equivalent to single-example stochastic gradient descent (SGD) without momentum, projected to the probability simplex.
    \item This justifies the empirical finding that SGD with no momentum forgets much less than standard optimizer settings.
    \item We demonstrate that catastrophic forgetting in neural networks is associated with the transition from the lazy to the rich regime. 
\end{itemize}

\section{Motivation: Ensembles are natural continual learners}

\begin{wrapfigure}{r}{0.3\textwidth} 
    \includegraphics[width=0.28\textwidth]{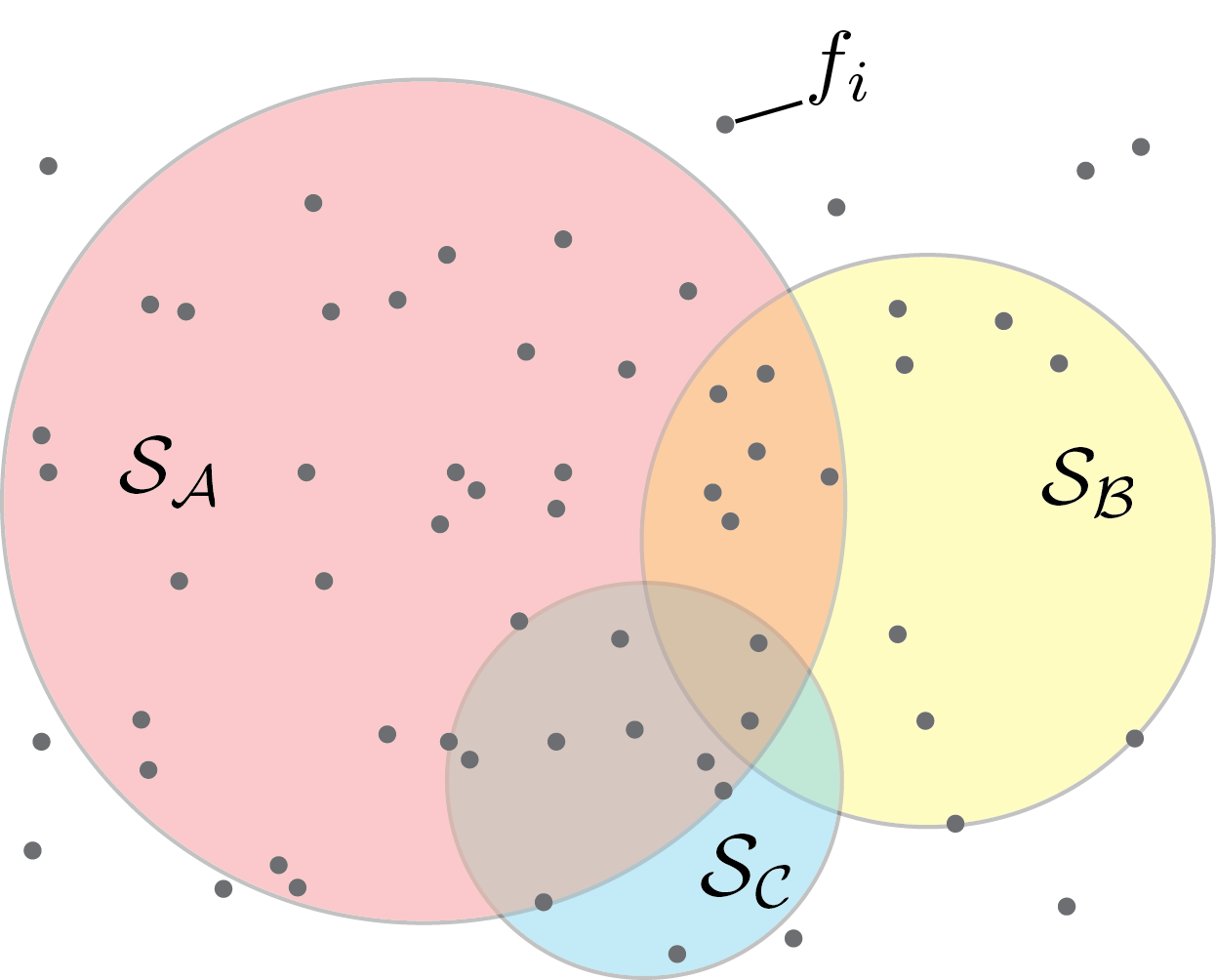}
    \caption{High-level intuition for model averaging and continual learning. Pruning the set of functions $f_i$ to those good for task $\mathcal{A}$, followed by further pruning for tasks $\mathcal{B}$ and $\mathcal{C}$, will result in a set of $f_i$ still good on $\mathcal{A}$.}
\end{wrapfigure}

To demonstrate why Bayesian ensembles make good continual learners, consider a function $f(x)$ that is an ensemble of many experts $f_i(x)$ (Fig. 1). We will consider what it takes to modify this ensemble so that it performs well on two tasks $\mathcal{A}$ and $\mathcal{B}$.

A simple strategy for continual learning is to prune away experts. Let $\mathcal{S}_\mathcal{A}$ be the set of functions that are good (and equally good) for task $\mathcal{A}$. A good ensemble can be constructed by sampling from $\mathcal{S}_\mathcal{A}$:

$$f_\mathcal{A}(x) = \frac{1}{N} \sum_{f_i \in \mathcal{S}_\mathcal{A}} f_i(x).$$

Given a subsequent task $\mathcal{B}$, a new ensemble can be constructed on the fly by continuing to prune away the experts in  $f_\mathcal{A}(x)$ that perform poorly on task $\mathcal{B}$. The remaining ensemble is still composed of experts from $\mathcal{S}_\mathcal{A}$ (assuming that the set intersection is not zero).


In contrast to many continual learning strategies for neural networks, this does not require replay, task boundaries, or any additional memory dedicated to preserving old task performance.

\subsection{Belief updating generalizes set intersections}

In real ensembles, experts do not perform equally well. This justifies weighing each expert in the ensemble with weights $p_i$ which are chosen such that $\sum_i^N p_i=1$:

\begin{equation}
    f_\mathcal{A}(x) = \sum_{f_i \in \mathcal{F}} p_i f_i(x).
\end{equation}
This is particularly convenient if the experts encode the probability or belief about an event, $f_i(x)=p(y|x,f_i)$. In this case, one can weigh each function by its posterior probability given previous data:

\begin{equation}\label{eq:weighed_ens}
    p(y|x,\mathcal{D})=\sum_{f_i \in \mathcal{F}}p(f_i|\mathcal{D})\;p(y|x,f_i).
\end{equation}

This is the optimal weighing strategy when the experts can be assumed to be independent \cite{tresp_combining_1994}. 

It is useful to contrast the ensemble in Eq. \ref{eq:weighed_ens} with linear regression using a feature map, $f(x) = \sum_i w_i \phi_i(x)$, as might be observed in kernel regression. In an ensemble the weights $w_i$ are strictly positive, whereas weights in regression may switch sign arbitrarily.

\subsection{The posterior is invariant to data ordering}

The property of Bayesian ensembles that motivates this paper is that the posterior probability of each expert is invariant to the order in which data in seen. This is because, like set intersections, single-task posteriors multiply to form the multi-task posterior:

\begin{equation}
p(f_i|\mathcal{A\cap B})\propto p(f_i|\mathcal{A})p(f_i|\mathcal{B}).
\end{equation}

This property is restated more formally in the following Lemma:
\begin{lemma}
Invariance to data ordering in Bayesian Ensembles.
Let $\mathcal{F} = {f_1, ..., f_N}$ be a set of fixed experts, $\mathcal{W}=w_1,...,w_N$ be their weights, and $\mathcal{D} = {D_1, ..., D_T}$ be a sequence of datasets from $T$ tasks. Then, for any permutation $\pi$ of the indices {1, ..., T},
$p(f_i|\mathcal{D}) = p(f_i|D_1, ..., D_T) = p(f_i|D_{\pi(1)}, ..., D_{\pi(T)})$
\end{lemma}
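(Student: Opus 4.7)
The plan is to reduce the claim to the commutativity of multiplication, once Bayes' rule and the conditional independence of the datasets given a fixed expert have been invoked. Since the lemma's hypothesis is that the experts $f_i$ are fixed (they do not change as data arrives), the likelihood of each individual dataset $D_t$ depends only on $f_i$, not on the other datasets $D_{s\neq t}$. This is the key structural property I will exploit.

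First I would write the posterior via Bayes' rule as
\begin{equation*}
p(f_i \mid D_1,\dots,D_T) \;=\; \frac{p(f_i)\,p(D_1,\dots,D_T \mid f_i)}{\sum_{j=1}^{N} p(f_j)\,p(D_1,\dots,D_T \mid f_j)}.
\end{equation*}
Then I would invoke conditional independence of the datasets given a fixed expert to factor the joint likelihood as $p(D_1,\dots,D_T \mid f_i) = \prod_{t=1}^{T} p(D_t \mid f_i)$. This factorization is the only nontrivial ingredient; it is immediate when each $D_t$ is generated i.i.d.\ from $f_i$, or more generally whenever the tasks are exchangeable conditional on the expert identity.

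Next I would observe that for any permutation $\pi$ of $\{1,\dots,T\}$, the product $\prod_{t=1}^{T} p(D_{\pi(t)} \mid f_i)$ equals $\prod_{t=1}^{T} p(D_t \mid f_i)$ because multiplication of real numbers is commutative. The same holds termwise inside the normalizing sum in the denominator. Therefore both numerator and denominator are invariant under $\pi$, which yields
\begin{equation*}
p(f_i \mid D_1,\dots,D_T) \;=\; p(f_i \mid D_{\pi(1)},\dots,D_{\pi(T)}),
\end{equation*}
as required.

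The main (minor) obstacle is really bookkeeping rather than mathematics: stating the conditional independence assumption explicitly, since the lemma's statement leaves the data-generating process implicit. I would make this explicit at the start of the proof, perhaps as a standing assumption that each $D_t$ is drawn from $p(\cdot \mid f_i)$ independently across $t$ given $f_i$. Once that is on the table, the rest is a one-line consequence of commutativity of products and of sums over the expert index.
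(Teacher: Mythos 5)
Your proof is correct and follows essentially the same route as the paper's: Bayes' rule, factorization of the joint likelihood into a product over datasets, and commutativity of multiplication. The only difference is that you make explicit the conditional-independence assumption and the permutation-invariance of the normalizing sum, both of which the paper leaves implicit.
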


\begin{proof}
By Bayes' rule, $p(f_i|\mathcal{D}) \propto p(f_i) \prod_{t=1}^T p(D_t|f_i)$. The right-hand side is a product of terms, one for each dataset. Since multiplication is commutative, $\prod_{t=1}^T p(D_t|f_i) = \prod_{t=1}^T p(D_{\pi(t)}|f_i)$ for any permutation $\pi$. Therefore, $p(f_i|D_1, ..., D_T) = p(f_i|D_{\pi(1)}, ..., D_{\pi(T)})$.
\end{proof}

Thus, there is no catastrophic forgetting problem for models which are ensembles of fixed, independent probabilistic classifiers. This motivates assessing under what conditions neural networks approach this setting.

\section{The Neural Tangent Ensemble}

How might a neural network be described as an ensemble? One simple strategy would be to take the last network layer as a set of functions, and then to choose the output weights according to their relative performance. However, this is is an expensive strategy to construct a relatively small set of classifiers, and it does not specify how earlier weights might change.

Here, we employ a first-order Taylor expansion to show that neural networks are (approximately) large ensembles over $N$ component functions, one for each edge in the network. We will examine a neural network $p(y|x, W^{(t)})$ with parameters $W^{(t)}$ whose output represents the probability or confidence of a label $y$ given input $x$. We can describe this output with a linearization around a very nearby \textit{seed point} $W^{(0)}$. Note that we use the notation $W^{(0)}$ and $W^{(t)}$ for consistency and in general $W^{(0)}$ need not be the initialization or on the optimization trajectory at all.




\begin{equation}
    p(y|x, W^{(t)}) \approx p(y|x, W^{(0)}) + \sum_i^N \Delta w_i\frac{\partial p(y|x,W^{(0)})}{\partial w_i^{(0)}}
\end{equation}

At first glance it does not appear that this Taylor expansion is an ensemble. There seem to be no true classifiers: the gradients are not probabilities over classes $y$, being neither nonnegative, bounded, nor normalized to 1 across the output labels. Nor are there true weights, as $\Delta w_i$ is also not nonnegative. However, both of these criteria can be met with some rearrangements and with the assumption that the loss is sufficiently smooth with respect to its parameters. This leads to our main result:

\begin{theorem} \label{theorem:nte}
Suppose $p(y|x, W^{(0)})$ is a neural network for which the log-likelihood is $L-$Lipschitz continuous in its parameters, i.e. all gradients of the loss are bounded by a constant $L$. Let $W^{(0)}$ then be perturbed by a $\Delta W$ with $\|\Delta W\|_1=z$. If the perturbation is sufficiently small (with $zL<1$), then \textbf{the network can be described as an ensemble of a set of N classifiers} $\{p(y|x,f_i)\}_i^N$, each with weight $\frac{|\Delta w_i|}{z}$, plus higher-order contributions which vanish for small $z$:
$$p(y|x, W^{(t)})=\sum_{\text{weights }i}^N\frac{|\Delta w_i|}{z}\; p(y|x,f_i) \,+\, \mathcal{O}(\|\Delta W\|^2_2)$$
Each classifier $p(y|x,f_i)$, which we call the \textbf{neural tangent expert}, outputs a probability distribution over labels $y$:
$$p(y|x,f_i)=p(y|x, W^{(0)})\left(1 +z\,\textrm{sign}(\Delta w_i)\frac{\partial  }{\partial w_i^{(0)}}\log p(y|x,W^{(0)})\right)$$
\end{theorem}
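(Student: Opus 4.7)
The plan is to start from the first-order Taylor expansion already written just above the theorem and massage it into the claimed ensemble form, then verify separately that each component $p(y\mid x, f_i)$ is a bona fide probability distribution over $y$.

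First, I would use the identity $\partial p/\partial w_i = p \cdot \partial \log p/\partial w_i$ to pull the base probability out of the Taylor expansion, giving
\[
p(y\mid x, W^{(t)}) = p(y\mid x, W^{(0)})\Bigl(1 + \sum_i \Delta w_i \,\tfrac{\partial}{\partial w_i^{(0)}}\log p(y\mid x, W^{(0)})\Bigr) + R,
\]
with Taylor remainder $R = O(\|\Delta W\|_2^2)$ by the standard second-order remainder bound (which is finite since the log-likelihood is assumed smooth with $L$-bounded gradient; any uniform bound on the Hessian of $p$ in a neighborhood suffices). Next I would write $\Delta w_i = \operatorname{sign}(\Delta w_i)\,|\Delta w_i|$ and use the assumption $\|\Delta W\|_1 = z$, so that $\sum_i |\Delta w_i|/z = 1$. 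Multiplying the constant $1$ inside the parenthesis by this unit sum lets me fold everything into a convex combination:
\[
p(y\mid x, W^{(t)}) = \sum_{i=1}^N \frac{|\Delta w_i|}{z}\; p(y\mid x, W^{(0)})\Bigl(1 + z\,\operatorname{sign}(\Delta w_i)\,\tfrac{\partial}{\partial w_i^{(0)}}\log p(y\mid x, W^{(0)})\Bigr) + O(\|\Delta W\|_2^2),
\]
which is exactly the claimed decomposition once we define $p(y\mid x, f_i)$ by the bracketed expression.

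The remaining work is to verify that each $p(y\mid x, f_i)$ really is a probability distribution over $y$. Nonnegativity is where the Lipschitz hypothesis enters: by assumption $|\partial_{w_i} \log p(y\mid x, W^{(0)})| \le L$, so the correction term inside the parenthesis has absolute value at most $zL < 1$, keeping the whole factor nonnegative. Normalization follows from the trick $\sum_y p \,\partial_{w_i}\log p = \partial_{w_i}\sum_y p = \partial_{w_i}(1) = 0$, so summing $p(y\mid x, f_i)$ over $y$ gives $1 + z\,\operatorname{sign}(\Delta w_i)\cdot 0 = 1$. Finally, the ensemble weights $|\Delta w_i|/z$ are nonnegative and sum to one by construction.

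The main subtlety, I expect, is the remainder estimate: strictly speaking the Lipschitz assumption controls only the first derivatives of $\log p$, not the second derivatives of $p$ itself, so to get a clean $O(\|\Delta W\|_2^2)$ bound one either needs to strengthen the hypothesis (assume bounded Hessian of $\log p$, or equivalently $C^2$ smoothness of $p$ bounded away from $0$ on the relevant trajectory) or interpret the bound in the integral-form Taylor remainder and absorb the constants. Everything else is just algebra, and the nonnegativity check is the one place where the quantitative hypothesis $zL<1$ is actually used.
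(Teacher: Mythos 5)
Your proposal is correct and follows essentially the same route as the paper's proof in Appendix A.1: split $\Delta w_i$ into sign and magnitude, use $\sum_i |\Delta w_i|/z = 1$ to fold the zeroth-order term into a convex combination, verify normalization over labels via the vanishing expectation of the score function, and use $zL<1$ for nonnegativity. Your closing remark that the $L$-Lipschitz hypothesis on $\log p$ does not by itself control the second-order remainder of $p$ is a fair observation about a gap the paper leaves implicit, but it does not change the substance of the argument.
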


The proof is postponed to Appendix \ref{app:theorem1}. Informally, it relies two simple rearrangements: splitting the weights into sign and magnitude $\Delta w_i=|\Delta w_i|\textrm{sign}(\Delta w_i)$, and bringing the zeroth order term inside the first-order sum. This results in a sum over a term which, surprisingly, sums to 1 over the output labels and is weighted by a term that sums to 1 over experts, meeting the criteria of an ensemble.

This simple reformulation invites a change in perspective about the role of each parameter in a deep neural network. Each parameter contributes a separate classifier. The distributed architecture and connected paths of the network matter, but they explicitly contribute through the gradients alone.

In the literature on ensembles, a common focus is to examine the \textit{quality} and \textit{diversity} of the experts separately. By the bias/variance decomposition, both aspects enter in the generalization error \cite{ortega_diversity_2022, ueda_generalization_1996}. Here, it is clear that all experts share a factor that is the overall quality of the center of the Taylor expansion, $p(y|x, W^{(0)})$. What distinguishes experts from one another is the diversity of network gradients.

\subsection{Experts are fixed in the lazy regime}

This paper is motivated by the fact that Bayesian ensembles of \textit{fixed} experts do not forget past data when learning by posterior updating. Under what conditions is the Neural Tangent Ensemble composed of fixed functions?

The answer to this question follows directly from the literature on the Neural Tangent Kernel (NTK) and lazy regime networks \cite{jacot_neural_2018, chizat_lazy_2019}. If the network is in the `lazy' regime, then the Jacobian of the network does not change during gradient descent learning and the linearization remains valid. This occurs in the limit of infinite width for MLPs for certain initializations \cite{jacot_neural_2018}. (Output scaling also controls laziness \cite{chizat_lazy_2019}, and is a necessary when using softmax nonlinearities even in the infinite width \cite{liu_linearity_2020}.) As a consequence, the experts in the NTE interpretation are fixed functions in the lazy regime.

\subsection{Learning ensemble weights}

If a network is secretly an ensemble, how should it learn from new data? The natural next step is to convert the NTE into a Bayesian ensemble. In a Bayesian ensemble, the weight of each function is its posterior probability given past data:

\begin{align}
    \frac{|\Delta w_i|}{z}\gets p(f_i|\mathcal{D}) = \frac{p(\mathcal{D}|f_i)\,p(f_i)}{\sum_ip(\mathcal{D}|f_i)\,p(f_i)}.
\end{align}

This can be seen as the E step in a generalized EM algorithm \cite{tresp_combining_1994}.
In the following section, we will describe how to calculate this posterior probability with an online learning algorithm. For the moment, we assume the experts are fixed functions (i.e. the network is lazy).

\subsubsection{The data likelihood}

\begin{lemma}
For IID data $\mathcal{D}=\{x_k,y_k\}_{k=1}^P$, the likelihood of the data under an expert can be written in terms of a log-likelihood loss function $\ell_k^{(0)}=-\log p(y_k|x_k,W^{(0)})$ of the network at initialization:

\begin{align}
    p(\mathcal{D}|f_i)&= \prod_{\text{examples }k}e^{-\ell_k^{(0)}}\left(1 -z\,\textrm{sign}(\Delta w_i)\frac{\partial  }{\partial w_i^{(0)}}\ell_k^{(0)}\right)
\end{align}
\end{lemma}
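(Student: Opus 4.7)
The plan is to reduce the lemma to a direct substitution using the IID factorization and the expert formula already established in Theorem \ref{theorem:nte}. First I would write
$$p(\mathcal{D}|f_i) \;=\; \prod_{k=1}^{P} p(y_k|x_k, f_i),$$
which is immediate from the IID assumption on $\mathcal{D}$. Then into each factor I would substitute the neural tangent expert expression from Theorem \ref{theorem:nte},
$$p(y_k|x_k, f_i) \;=\; p(y_k|x_k, W^{(0)})\left(1 + z\,\mathrm{sign}(\Delta w_i)\,\frac{\partial}{\partial w_i^{(0)}}\log p(y_k|x_k, W^{(0)})\right).$$

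Next I would invoke the definition of the per-example loss $\ell_k^{(0)} = -\log p(y_k|x_k, W^{(0)})$. This gives two simultaneous rewritings: the prefactor becomes $p(y_k|x_k, W^{(0)}) = e^{-\ell_k^{(0)}}$, and the gradient inside the parenthesis satisfies $\frac{\partial}{\partial w_i^{(0)}}\log p(y_k|x_k, W^{(0)}) = -\frac{\partial}{\partial w_i^{(0)}}\ell_k^{(0)}$, which is exactly what produces the minus sign in the stated form. Taking the product over all $k$ then yields the claimed expression.

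The computation is a pure substitution and there is no genuinely hard step; the only item worth flagging is conceptual. Theorem \ref{theorem:nte} is itself an identity only up to $\mathcal{O}(\|\Delta W\|_2^2)$, so the equality in the lemma should be read to the same order, and the smallness condition $zL < 1$ from the theorem is what guarantees each factor in the product remains positive so that the right-hand side is a legitimate likelihood. With those caveats in place, the derivation reduces to plugging in the definition of $\ell_k^{(0)}$ and multiplying across examples.
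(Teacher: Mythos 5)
Your proposal is correct and follows essentially the same route as the paper: IID factorization, substitution of the tangent-expert expression from Theorem \ref{theorem:nte}, and rewriting via the definition of $\ell_k^{(0)}$ (the paper factors out $p(y_k|x_k,W^{(0)})$ to pass from the gradient of $p$ to the gradient of $\log p$, which is the same step you perform implicitly by quoting the log form of the expert). Your added remarks about the $\mathcal{O}(\|\Delta W\|_2^2)$ order of validity and the role of $zL<1$ in keeping each factor positive are accurate and, if anything, slightly more careful than the paper's own proof.
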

\begin{proof}
    Starting with the data likelihood,
\begin{align}
    p(\mathcal{D}|f_i)&=\prod_{\text{examples }k}p(y_k|x_k,f_i)\\
    &=\prod_{\text{examples }k}\left(p(y_k|x_k, W^{(0)}) +z\,\textrm{sign}(\Delta w_i)\frac{\partial  }{\partial w_i^{(0)}}p(y_k|x_k,W^{(0)})\right)\\
    &= \prod_{\text{examples }k}p(y_k|x_k, W^{(0)})\left(1 +z\,\textrm{sign}(\Delta w_i)\frac{\partial  }{\partial w_i^{(0)}}\log p(y_k|x_k,W^{(0)})\right)
\end{align}
Plugging in the definition of $\ell_k^{(0)}$ yields the above expression.
\end{proof}

\subsubsection{The posterior probability: renormalization}

After the data likelihoods are computed for each neural tangent expert, they must be renormalized to obtain the posterior probabilities. In our case, we naturally have access to a very large number of tangent experts and their likelihoods. Indeed, if the width is indeed taken to infinity, this there are infinitely many neural tangent experts in a single network. We propose to use the distribution of likelihoods in the current network as a Monte Carlo estimate of the normalizing constant. 

\begin{align}
  p(f_i|\mathcal{D}) 
&=\frac{\prod_{\text{examples }k}e^{-\ell_k^{(0)}}\left(1 -z\,\textrm{sign}(\Delta w_i)\frac{\partial  }{\partial w_i^{(0)}}\ell_k^{(0)}\right)p(f_i)}{\sum_i\prod_{\text{examples }k}e^{-\ell_k^{(0)}}\left(1 -z\,\textrm{sign}(\Delta w_i)\frac{\partial  }{\partial w_i^{(0)}}\ell_k^{(0)}\right)p(f_i)}
\end{align}
This can be simplified by noting that each $e^{-\ell_k^{(0)}}$ term will cancel; the product $\prod_ke^{-\ell_k^{(0)}}$ appears in every additive term in the denominator. Assuming a uniform prior $p(f_i)$, we then have:
\begin{align}\label{eq:posterior_update}
  p(f_i|\mathcal{D})
&=\frac{\prod_{k}\left(1 -z\,\textrm{sign}(\Delta w_i)\frac{\partial  }{\partial w_i^{(0)}}\ell_k^{(0)}\right)}{\sum_i\prod_{k}\left(1 -z\,\textrm{sign}(\Delta w_i)\frac{\partial  }{\partial w_i^{(0)}}\ell_k^{(0)}\right)}
\end{align}

\subsection{The posterior update is (almost) stochastic gradient descent}

We will now link this posterior expression with a neural network update rule. Recall that in Theorem \ref{theorem:nte}, the normalized magnitude of each perturbation is interpreted as the posterior probability of the corresponding neural tangent expert. 
$$\frac{|\Delta w_i|}{z}\gets p(f_i|\mathcal{D})$$
This means the parameters can act as a running cache of the posterior as new data is encountered. 
As in standard belief updating, this involves a likelihood update followed by renormalization. 
Surprisingly, this multiplicative belief update rule yields an update which is very close to SGD.

\begin{lemma} \label{theorem:sgd}
For any network that is well-described as a first-order Taylor expansion around around $W^{(0)}$ with perturbation $\|\Delta W\|_1=z$, the posterior belief update given a new example is equivalent to single-example stochastic gradient descent under a cross-entropy loss objective, subject to the constraint that $\|\Delta W\|_1=z$, and using a per-parameter learning rate of $z |\Delta w_i|$. 
\end{lemma}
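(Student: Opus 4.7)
The plan is to specialize the posterior update in Eq.~\ref{eq:posterior_update} to a single incoming example, interpret the current normalized magnitudes $|\Delta w_i|/z$ as the prior $p(f_i)$, and then read off the induced update on the parameters $\Delta w_i$. Concretely, when a new pair $(x,y)$ with cross-entropy $\ell = -\log p(y|x,W^{(0)})$ arrives, Bayes' rule together with Lemma~2 (specialized to $P=1$) gives
\begin{equation*}
\frac{|\Delta w_i^{\text{new}}|}{z} = \frac{1}{Z}\cdot\frac{|\Delta w_i|}{z}\left(1 - z\,\textrm{sign}(\Delta w_i)\frac{\partial \ell}{\partial w_i^{(0)}}\right),
\end{equation*}
where $Z$ is the scalar normalizer enforcing $\sum_i |\Delta w_i^{\text{new}}|/z = 1$, equivalently $\|\Delta W^{\text{new}}\|_1 = z$.

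Next I would clear the common $z$ and use $|\Delta w_i|\,\textrm{sign}(\Delta w_i) = \Delta w_i$ inside the bracket, then multiply both sides by $\textrm{sign}(\Delta w_i)$ and apply $\textrm{sign}(\Delta w_i)|\Delta w_i^{\text{new}}| = \Delta w_i^{\text{new}}$ (valid whenever the sign of $\Delta w_i$ is preserved) to collapse everything into
\begin{equation*}
\Delta w_i^{\text{new}} = \frac{1}{Z}\left(\Delta w_i - z\,|\Delta w_i|\,\frac{\partial \ell}{\partial w_i^{(0)}}\right).
\end{equation*}
This is exactly single-example SGD on the cross-entropy loss with per-parameter learning rate $\eta_i = z\,|\Delta w_i|$, followed by a uniform rescaling by $Z$. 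Since $Z$ acts as a common multiplicative factor across all coordinates, that rescaling is precisely the $\ell_1$ projection onto the ball of radius $z$ required by the stated constraint; the word ``almost'' in the lemma's informal claim refers to exactly this projection step, which is the only way the update departs from vanilla SGD.

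The step I expect to be most delicate is the sign handling. Both identities $|\Delta w_i|\textrm{sign}(\Delta w_i) = \Delta w_i$ and $\textrm{sign}(\Delta w_i)|\Delta w_i^{\text{new}}| = \Delta w_i^{\text{new}}$ implicitly require that the sign of $\Delta w_i$ is well-defined and preserved across the update. I would verify this from the Theorem~\ref{theorem:nte} hypothesis $zL<1$: since $|\partial_{w_i}\ell|\le L$, the likelihood factor $1 - z\,\textrm{sign}(\Delta w_i)\partial_{w_i}\ell$ is strictly positive, so the unnormalized updated magnitude stays positive and the sign of $\Delta w_i$ persists. Parameters whose magnitude has collapsed to zero can be handled by an arbitrary convention for $\textrm{sign}(0)$ and, since their contribution vanishes on both sides, do not affect the identification with SGD.
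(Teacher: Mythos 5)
Your proposal follows essentially the same route as the paper's proof: write the single-example multiplicative posterior update, multiply through by $z$ and $\textrm{sign}(\Delta w_i)$ to turn it into an additive rule $\Delta w_i' = \Delta w_i - z|\Delta w_i|\,\partial_{w_i}\ell$, and handle the $\ell_1$ constraint by a renormalization step. Your explicit verification that the sign of $\Delta w_i$ is preserved under the update (via $zL<1$ making the likelihood factor strictly positive) is a point the paper's proof uses implicitly but does not spell out, and is a welcome addition.
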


\begin{proof}

The proof is a matter of writing out how the posterior changes with a single example. Multiplying by the likelihood of a new example, the unnormalized posterior updates as:

\begin{align} \label{eq:app_exp}
  \frac{|\Delta w_i'^{(t)}|}{z} &= \frac{|\Delta w_i^{(t-1)}|}{z}\left(1 -z\,\textrm{sign}(\Delta w_i^{(t-1)})\frac{\partial  }{\partial w_i^{(0)}}\ell_k^{(0)}\right)
\end{align}

This multiplicative update for the unnormalized weights can also be written an \textit{additive} rule. Multiplying by $z$ and by $\textrm{sign}(\Delta w_i)$,
\begin{align}\label{eq:nearly_grad}
  \Delta w_i'^{(t)} = \Delta w_i^{(t-1)}-z|\Delta w_i^{(t-1)}|\frac{\partial  }{\partial w_i^{(0)}}\ell_k^{(0)}
\end{align}

One can add the initial parameters to either side to yield a rule in the space of network parameters:

\begin{align} \label{eq:unnorm_SGD}
  w_i'^{(t)} = w_i^{(t-1)}-z|\Delta w_i^{(t-1)}|\frac{\partial  }{\partial w_i^{(0)}}\ell_k^{(0)}
\end{align}

This is true (single-example) stochastic gradient descent \textit{projected in the L1 diamond} with a learning rate $z|\Delta w_i|$. Note that this does not allow averaging gradients across examples (a ``batch size of 1'' update) and that it uses the gradients at initialization (though see section \ref{sec:current_grads}).

To complete the update, the parameters should then be renormalized such that $\sum_i |\Delta w_i^{(t)}|=z$.

An alternative normalization scheme is to use a gradient projection algorithm. Adding a Langrage multiplier $\gamma$ to Eq. \ref{eq:nearly_grad} and solving for the $\gamma$ that ensures $\sum_i|\Delta w_i|=z$ yields a update which keeps $\|\Delta W\|_1=z$ even without renormalization:

\begin{align}\label{eq:projected_grad}
  w_i^{(t)} = w_i^{(t-1)}-z\left(|\Delta w_i^{(t-1)}|\frac{\partial  }{\partial w_i^{(0)}}\ell_{k}^{(0)}-\textrm{avg}_j\left(|\Delta w_j^{(t-1)}|\frac{\partial  }{\partial w_j^{(0)}}\ell_k^{(0)}\right)\right)
\end{align}

\end{proof}
Not only is the posterior update tractable, then, but it is sufficiently close to gradient descent that it can be interpreted in a standard optimization framework. 

Although it may seem that our result would depend on the idiosyncratic likelihood function of the NTE, this result is nevertheless similar to previous algorithms that have been proposed as ways to weigh many experts. At high level, our result appears similar to the Multiplicative Weights algorithm described in \cite{arora_multiplicative_2012}. Another interpretation of this algorithm is as the \textit{approximated exponential gradient descent with positive and negative weights} algorithm from \cite{kivinen_exponentiated_1997} but applied to the change in weights $\Delta W$. There, it is derived by minimizing an arbitrary loss function under a constrained change in the relative entropy over ensemble weights to obtain the \textit{exponentiated gradient descent algorithm}, which is then linearized with a Taylor expansion in the approximated version.

\subsection{Summary of the NTE theory}

The Neural Tangent Ensemble is an interpretation of networks as ensembles of \textit{neural tangent experts}. Updating the NTE of lazy networks as a Bayesian ensemble creates a perfect continual learner, in the sense that the multitask solution is guaranteed to be the same as the sequential task solution.

The posterior probability of each expert in the NTE is surprisingly tractable. Given a new example, the update rule is a simple additive rule in the space of network parameters which can be interpreted as projected gradient descent scaled by the change in parameters since initialization.

\section{Networks away from the lazy regime}

In real finite-width networks, gradients change throughout learning. Since each weight's corresponding neural tangent expert changes, there is no guarantee that weights at time $t$ still reflect the cumulative likelihood of past data under that expert.

\begin{SCfigure}[1.2][h]\label{fig:jac-change}
    \centering
    \includegraphics[width=0.4\linewidth]{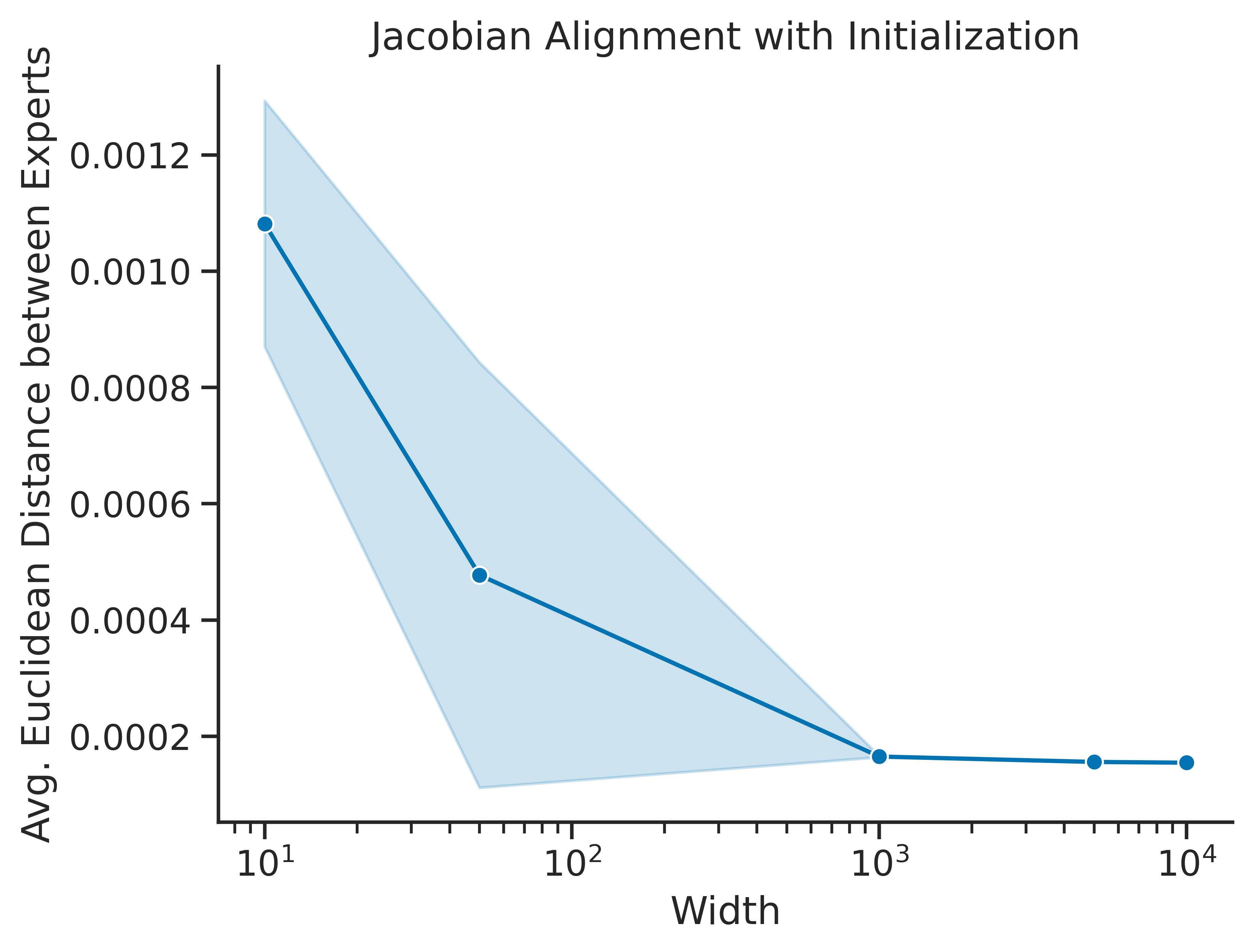}
    \caption{The average squared difference between experts' columns of the Jacobian measured at initialization and the end of training on MNIST with an 2-layer ReLU MLP and the NTE rule. Error bands indicate the standard deviation over 10 random seeds. As the width of the network increases, the average distance decreases, indicating the larger networks remain closer to the original linearization.}
\end{SCfigure}

This phenomenon is clearly observed empirically by measuring how much experts change under the NTE update rule as a function of network width. In Fig. \ref{fig:jac-change}, we measure the average change in expert's Jacobian from initialization after training on MNIST as a function of network width with the NTE rule described above. Experts change less in wider networks than in narrow networks.

Another way this can be measured is by verifying that, in finite-width networks, the posterior update rule using the gradients around initialization does not lead to effective training. In Figure \ref{fig:current-grads}, we confirm that as the gradients lose correlation with the gradient at initialization, performance begins to rapidly degrade. This echoes the findings of \cite{chizat_lazy_2019} that linearized CNNs do not learn as effectively as their non-lazy counterparts. Thus, the NTE posterior update rule as written above is only effective when the Jacobian is truly static.

\begin{SCfigure}[.6][h]\label{fig:current-grads}
    \centering
    \includegraphics[width=0.7\textwidth]{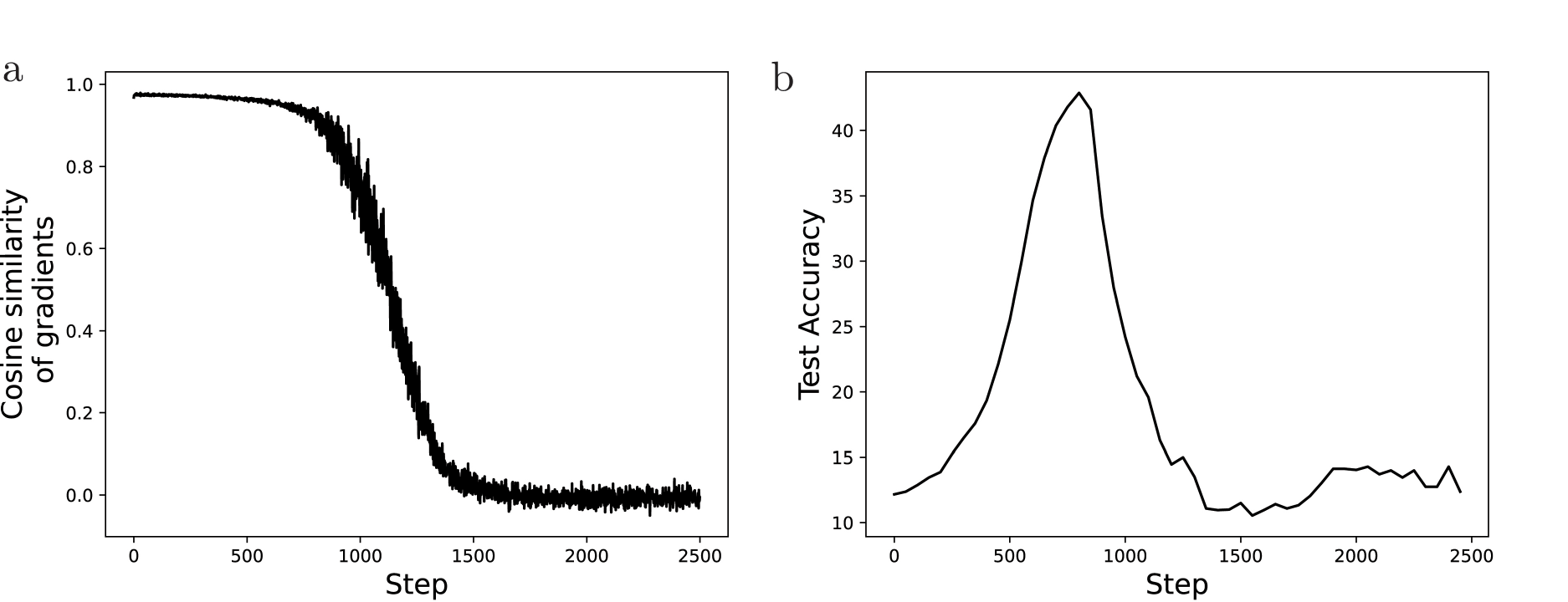}
    \caption{a) Gradients of an MLP at time $(t)$ rapidly lose correlation with the gradients at initialization. b) Training a network with the NTE posterior update rule fails when gradients diverge. Hyperparameters are reported in the Appendix.}
\end{SCfigure}

\subsection{Rich-regime networks are ensembles of adaptive experts}\label{sec:current_grads}

To ensure the NTE formulation remains valid, one can allow the seed point of the Taylor expansion (the `initialization') to change throughout learning. This has an interesting interpretation. Namely, it allows us to view finite-width neural networks as \textbf{ensembles of changing neural tangent experts}. 

\begin{lemma} \label{theorem:living-ensemble}
(informal) Let $W^{(t)}$ be the parameters of a (finite-width) neural network. Choose a nearby \textbf{seed point} $\tilde{W}^{(t)}$ as $W^{(t)}+\epsilon$, with $\epsilon$ fixed  and $\|\epsilon\|_2$ sufficiently small relative to the curvature such that the Jacobians of the log output probabilities of the perturbed and unperturbed networks are identical, $J(\tilde{W}^{(t)})=J(W^{(t)})$. The network can then be written as an ensemble of adaptive experts:
$$p(y|x,f_i^{(t)})=p(y|x, \tilde{W}^{(t)})\left(1 +\|\epsilon\|_1\textrm{sign}(\epsilon_i)\frac{\partial  }{\partial  {w}_i^{(t)}}\log p(y|x, {W}^{(t)})\right)$$

If $\epsilon$ is set as the uniform vector with values $\epsilon_i=\sqrt{\eta/N}$, the learning rate in the posterior update rule reduces to $\|\epsilon\|_1|\epsilon_i|=\eta$ and we recover stochastic gradient descent with mean-centered gradients and learning rate $\eta$:
\begin{align}
  w_i^{(t+1)} = w_i^{(t)}-\eta\left(\frac{\partial  }{\partial  {w}_i^{(t)}}\ell_K^{ (t)} -\textrm{avg}_j\left(\frac{\partial  }{\partial  {w}_j^{(t)}}\ell_K^{(t)}\right)\right)
\end{align}

\end{lemma}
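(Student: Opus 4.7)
The plan is to reapply Theorem \ref{theorem:nte} at the drifting seed point $\tilde{W}^{(t)}$ and then specialize the projected-gradient update of Lemma \ref{theorem:sgd} to the prescribed choice of $\epsilon$. Since $\tilde{W}^{(t)} = W^{(t)} + \epsilon$, I would view $W^{(t)}$ as the perturbation of $\tilde{W}^{(t)}$ by $\Delta W = -\epsilon$, so that $\|\Delta W\|_1 = \|\epsilon\|_1$ and $|\Delta w_i| = |\epsilon_i|$. Invoking Theorem \ref{theorem:nte} with this seed point directly gives the ensemble decomposition $p(y|x, W^{(t)}) = \sum_i \frac{|\epsilon_i|}{\|\epsilon\|_1}\, p(y|x, f_i^{(t)})$ with experts of the stated form, up to the standard first-order error. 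The gradient of $\log p(y|x, \tilde{W}^{(t)})$ can be replaced by the gradient of $\log p(y|x, W^{(t)})$ by the Jacobian-constancy hypothesis, which is the only place that hypothesis is used. The experts now carry a time index because the seed drifts with $t$.

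With the ensemble in hand, I would then invoke Lemma \ref{theorem:sgd}, whose projected-gradient form (Eq.~\ref{eq:projected_grad}) reads
\begin{equation*}
    w_i^{(t+1)} = w_i^{(t)} - z\bigl(|\Delta w_i|\,\partial_{w_i}\ell_k - \textrm{avg}_j(|\Delta w_j|\,\partial_{w_j}\ell_k)\bigr),
\end{equation*}
with per-parameter learning rate $z|\Delta w_i|$. Substituting $z = \|\epsilon\|_1$, $|\Delta w_i| = |\epsilon_i|$, and the uniform choice $\epsilon_i = \sqrt{\eta/N}$ gives $\|\epsilon\|_1 = \sqrt{N\eta}$, and hence a uniform per-parameter learning rate $\|\epsilon\|_1|\epsilon_i| = \eta$. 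Because the weighting $|\epsilon_j|$ inside the average is now a constant, it pulls outside the average and yields exactly the claimed mean-centered SGD update.

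The main obstacle is conceptual rather than algebraic. Lemma \ref{theorem:sgd} was derived under a \emph{fixed} seed point, with $\Delta W$ accumulating the posterior over many updates; here the seed drifts so that $\Delta W$ is reset to $-\epsilon$ after every step. The natural justification is to read each update as a fresh Bayesian step in which the prior over the current adaptive experts is uniform, so the posterior is continually renormalized back to uniform and all the actual learning is absorbed into the drift of the experts themselves --- precisely the ``ensemble of changing experts'' picture the lemma is advertising. One should also verify that the curvature condition ensuring $J(\tilde{W}^{(t)}) = J(W^{(t)})$ is preserved along the trajectory, which is why the statement is called informal: it effectively requires $\|\epsilon\|_2 = \sqrt{\eta}$ to be small relative to the local Hessian of $\log p$ everywhere the iterates visit.
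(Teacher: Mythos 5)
Your proposal is correct and follows exactly the route the paper intends: the paper prints no proof for this informal lemma, but re-seeding Theorem \ref{theorem:nte} at $\tilde{W}^{(t)}$ with $\Delta W=-\epsilon$ and then specializing the projected update of Eq.~\ref{eq:projected_grad} to the uniform $\epsilon_i=\sqrt{\eta/N}$ (so $\|\epsilon\|_1|\epsilon_i|=\eta$) is precisely the argument the surrounding text sketches, and your observation that the per-step uniform-prior reset is what licenses reusing Lemma \ref{theorem:sgd} with a drifting seed is the right way to patch its fixed-seed assumption. The only quibble is a sign convention: with $\Delta W=-\epsilon$, Theorem \ref{theorem:nte} literally yields $\textrm{sign}(-\epsilon_i)$ in the expert, opposite to the lemma's displayed formula, but this washes out when the updated perturbation is recombined with the seed, so the final mean-centered SGD update is unaffected.
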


Rich-regime learning is thus akin to a particle filter; each expert changes individually, but the prediction is the ensemble vote.

A interesting feature of this lemma is the equivalence between the rule that improves each expert (gradient descent on $w$) and the rule that decides how to weigh the experts in the ensemble (also gradient descent on $w$). This need not have been the case. As a result, one can perform belief updating assuming a fixed ensemble and end up improving each expert within it.

\subsection{The NTE rule with current gradients} \label{sec:NTE-rule}

Motivated by this result, we evaluated how well the NTE posterior update rule works when the gradients evaluated at initialization, $\frac{\partial  }{\partial w_i^{(0)}}\ell_{K}^{(0)}$, are replaced with the gradients of the current network $\frac{\partial  }{\partial w_i^{(t)}}\ell_{K}^{(t)}$. These converge in the infinite-width limit.

To obtain a practical algorithm, we additionally modify the NTE update rule with two hyperparameters that control the learning rate. First, noting that $z$ in Eq. \ref{eq:unnorm_SGD} acts as a learning rate, we replace it with a tunable parameter $\eta$. Secondly, we introduce a regularization parameter $\beta$ which keeps the network close to initialization as measured by the relative entropy of the change in parameters (see Appendix \ref{app:beta} for derivation). This constrains the amount of information contained in the weights \cite{hinton_keeping_1993}. 

Pseudocode for the resulting algorithm is in the Appendix \ref{alg:cap}. We also display the result of sweeps over $\beta$ and $\eta$ on the Permuted MNIST task in Fig. \ref{fig:hyperparams}.

\section{Predictions and results}

Our findings suggest several ways to reduce forgetting in finite networks.
First, networks closer to the lazy regime will better remember old tasks as long as the update rule is sufficiently similar to the NTE posterior update rule.
Second, one should be able to reduce forgetting by ablating standard optimization methods like momentum and moving towards the NTE posterior update rule.

Below, we verify these predictions on the Permuted MNIST task with MLPs and on the task-incremental CIFAR100 with modern CNN architectures. In the Permuted MNIST task, an MLP with 10 output units is tasked with repeatedly classifying MNIST, but in each task the pixels are shuffled with a new static permutation. In task-incremental CIFAR100, a convolutional net with 100 output units sees only 10 classes each task. In the terminology of van de Ven et al, this is a task-incremental task, whereas Permuted MNIST is a domain-incremental task \cite{van_de_ven_three_2022}. 

\subsection{Momentum causes forgetting}
Momentum is not appropriate in a posterior update framework because it over-counts the likelihood of past data. Furthermore, it is a history-dependent factor. By contrast, posterior update rules are multiplicative and give identical results regardless of the order of data presentation.

Here, we report that \textit{any} amount of momentum with SGD is harmful for remembering past tasks. To our knowledge, this has not been noted by previous empirical studies on catastrophic forgetting \cite{goodfellow_empirical_2015, mirzadeh_understanding_2020, mirzadeh_wide_2022, ashley_does_2021}. As can been seen in Fig. \ref{fig:SGD}, increasing momentum monotonically increases forgetting a first task on Permuted MNIST. Similar trends exist for ResNet18 and ConvNeXtTiny on the CIFAR100 task (see Fig. \ref{fig:cifar-momentum}) \cite{liu_convnet_2022}. Note that the momentum buffers were not reset between tasks; when they are reset, the momentum curve is nonmonotonic (see Fig. \ref{fig:mnist_reset}). Although momentum assists single-task performance, any amount of momentum will lead to forgetting previous knowledge.

\begin{figure}[h]
    \centering
    \includegraphics[width=0.99\textwidth]{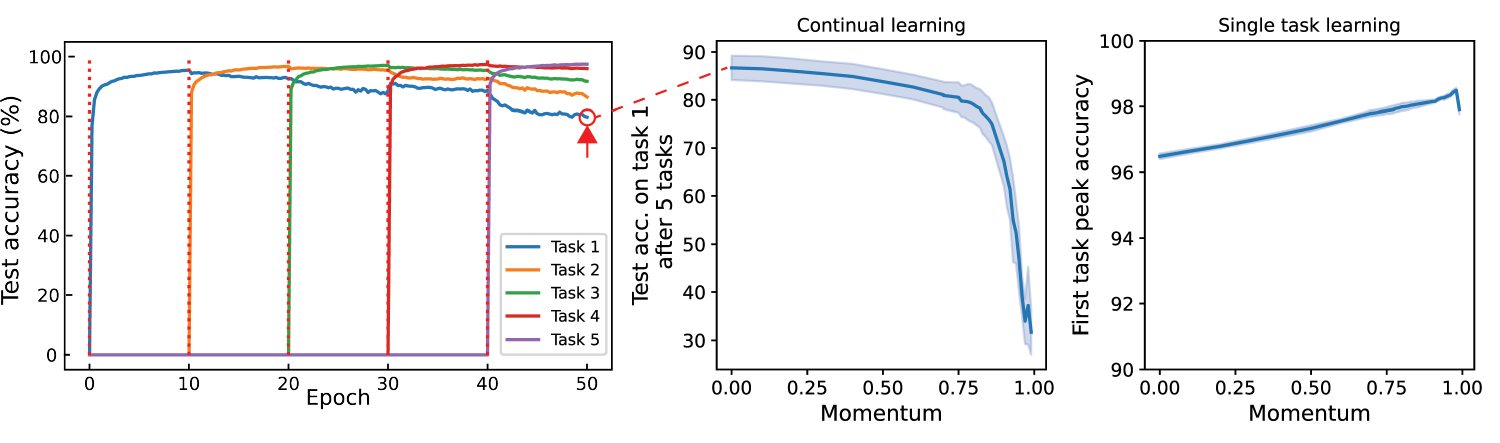}
    \caption{Effect of momentum in SGD on the Permuted MNIST task for an MLP with 2 layers and 1,000 hidden units. (middle) Test accuracy on the first task at the end of training 5 sequential tasks. (right) Final test accuracy on the first task before seeing the other tasks. Error bars represent standard deviations over seeds. See Appendix for further parameters.}
    \label{fig:SGD}
\end{figure}

\subsection{Width improves remembering –- but only for certain optimizers}

As networks grow wider and (slowly) approach their infinite-width limit, they should remember better if one uses the appropriate posterior update rule over the Neural Tangent Ensemble. 

Previous literature confirms that this is generally the case. In \cite{ramasesh_effect_2022}, the authors report the benefits of scale are robust across architectures, tasks, and pretraining strategies, although they largely use SGD with momentum $\beta=0.9$. In \cite{mirzadeh_wide_2022}, the authors report similar results and investigate other continual learning benchmark algorithms such as EWC (\cite{kirkpatrick_overcoming_2017}). Forgetting seems to be largely solved by scale.

The reason for this in our framework differs from the reason cited by both \cite{mirzadeh_wide_2022,ramasesh_effect_2022}, which state that the gradients on different tasks will be more orthogonal in high dimensions, which reduces interference. Our interpretation is somewhat different and instead depends on the Jacobian of the network changes. We place no condition on gradient orthogonality between tasks. If the neural tangent experts are indeed fixed, the NTE update rule will find the multitask solution.

If this is the case, then wide networks should better remember only if the optimizer can be interpreted as a posterior update. In Fig. \ref{fig:width-mnist}, we report that Adam's amnesia is not helped with increasing scale for the Permuted MNIST task. Although this could be for multiple reasons, we argue it stems from a divergence from a valid interpretation as a posterior update.

\subsection{The NTE posterior update rule using current gradients improves with scale}

In Section \ref{sec:NTE-rule}, we introduced a modified version of NTE posterior update rule in which the Jacobian at initialization replaced with the current Jacobian.
As networks get wider, this algorithm will converges to the proper update rule due to the fact that the network Jacobian does not change in the lazy regime. This predicts that this rule will improve with scale. To test this, we trained an ML on Permuted MNIST and ConvNeXtTiny on task-incremental CIFAR100 with this approximate rule. We find that both single-task and multitask learning are greatly improved with width (Fig. \ref{fig:width-mnist} and Fig. \ref{fig:cifar-width}). We take this as empirical evidence that the proper NTE posterior update (with a static Jacobian) would work well in the infinite-width limit.

\begin{SCfigure}[.8][h]
    \centering
    \includegraphics[width=0.7\textwidth]{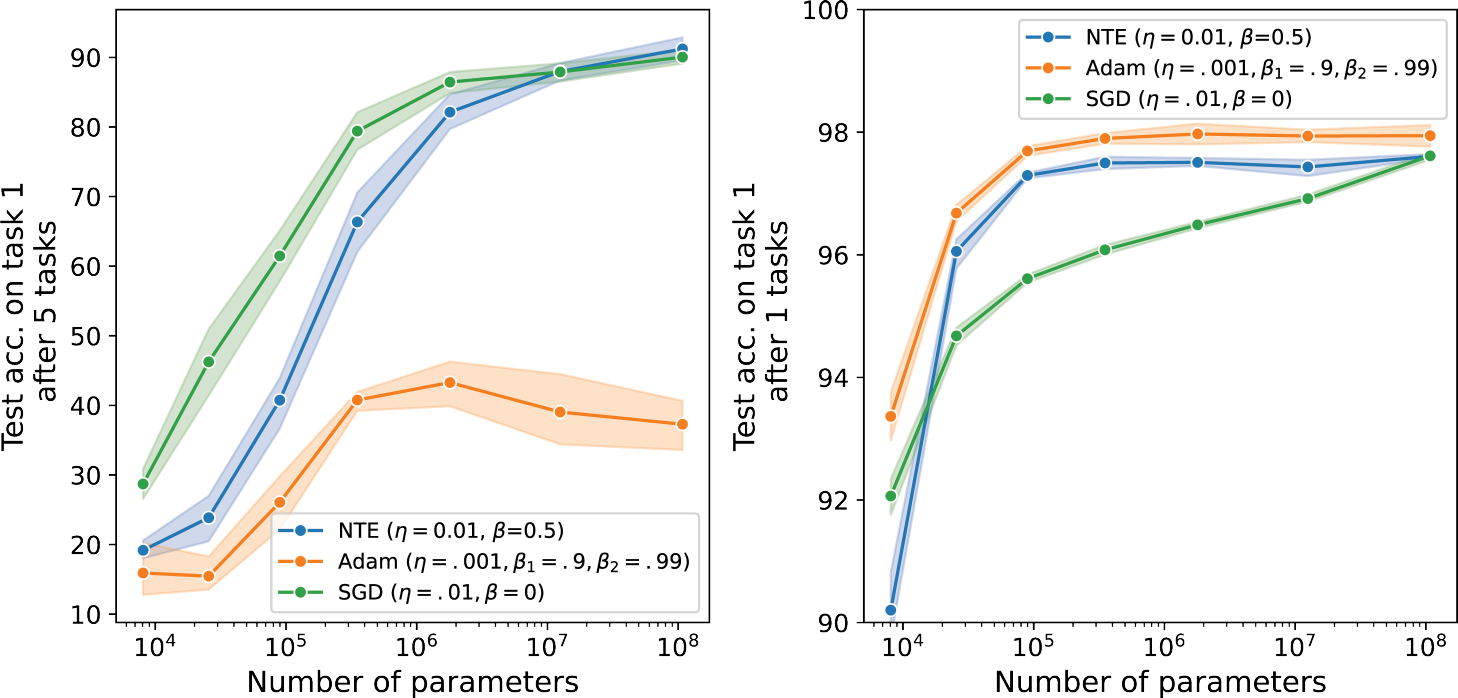}
    \caption{Wider networks forget less, unless trained with Adam. See Alg. \protect\ref{alg:cap}. All networks are 2-layer MLPs with ReLU nonlinearities trained on 5 Permuted MNIST tasks. Loss curves and further parameters in Appendix. Error bars represent standard deviations. }
    \label{fig:width-mnist}
\end{SCfigure}

\section{Related work}

There is a long history of interpreting networks as ensembles. Networks with dropout, for example, allow this interpretation \cite{gal_dropout_2016, hara_analysis_2016}. This is also closely related to Mixture of Experts models in classic \cite{jacobs_adaptive_1991, jordan_hierarchical_1994} and recent \cite{shazeer_outrageously_2017,zhou_mixture--experts_2022} work. These approaches explicitly encode the experts within the network, and unlike our work do not use a Taylor expansion to establish the ensemble experts.

The idea of a Bayesian ensemble \textit{over} networks is also well-studied. Such ensembles can either be assembled empirically through sampling \cite{wenzel_how_2020, welling_bayesian_2011}, built via a Laplace approximation \cite{mackay_practical_1992}, or optimized \cite{blundell_weight_2015}. Bayesian posteriors are also common players in theoretical works using methods from statistical physics and PAC-Bayes \cite{smith_bayesian_2018, levin_statistical_1990}. Some treatments of infinite-width limits study the ensemble of lazy learners \cite{he_bayesian_2020}.  While similar in spirit, these methods study groups of many networks rather than view a single network as an ensemble.

Finally, there is related work that uses ideas from ensembles for continual learning. Many of these are in the category of methods that continually learn by training new modules for each task 
\cite{vogelstein_representation_2024, chen_lifelong_2023, shanahan_encoders_2021, wojcik_neural_2022, powers_self-activating_2022, kang_forget-free_2022}. Most directly related to this current work are papers that take a Bayesian approach and track statistics about the approximated posterior over networks \cite{kirkpatrick_overcoming_2017, ebrahimi_uncertainty-guided_2020, farquhar_unifying_2019, li_continual_2020, ritter_online_2018, nguyen_variational_2018}. Many of these works in both categories require task boundaries. Furthermore, by introducing new modules or tracking statistics, these methods require additional memory to prevent forgetting.

\subsection{Moving in directions of low curvature to forget less}

Our framework justifies the strategy of encouraging parameters to change mostly in directions of low curvature. 
Such regularization methods are already well-established and proven to reduce forgetting \cite{kong_overcoming_2023, lubana_how_2022, ritter_online_2018}. Although not directly equivalent, this is also similar to Elastic Weight Consolidation, which penalizes by the Fisher Information matrix (an expected second-order derivative of the \textit{log}-likelihood, rather than the likelihood) \cite{kirkpatrick_overcoming_2017}. Another proximal method is Synaptic Intelligence, which penalizes parameter changes proportional to their integrated gradients along the path, which in the special case of diagonal, quadratic loss functions, is equivalent the Hessian \cite{zenke_continual_2017}. Thus, a second interpretation of why these methods work well (and improve with scale \cite{mirzadeh_wide_2022}) is that they ensure the tangent experts in the NTE do not change much while learning.

\section{Discussion}

Here, we described how networks in the lazy regime can be seen as ensembles of fixed classifiers. With this perspective, we proposed weighing each expert by its posterior probability to form a Bayesian ensemble, and derived the update rule. This strategy of learning by posterior updating has the benefit that the order of data presentation does not matter -- sequential experience and interleaved experience lead to identical ensembles. 

The posterior update rule to the Tangent Ensemble is surprisingly similar to SGD on the model weights. However, it is interesting to note that this update rule is suboptimal. Posterior probabilities are the optimal ensemble weights only when the experts are independent \cite{ueda_generalization_1996, meir_generalization_2003, ortega_diversity_2022} and well-specified \cite{masegosa_learning_2020, cherief-abdellatif_mmd-bayes_2020}. This assumption is violated by the use of shared data, as well as the fact that neural network architectures introduce dependencies between
gradients. Although this does not affect the equivalence between the interleaved and sequential task performance (i.e. forgetting), this will reduce the performance of networks trained with the NTE posterior update. This suggests avenues for improving SGD.

This suboptimality could be addressed in multiple ways. In the ensemble literature, there are many strategies to diversify the expert pool \cite{brown_diversity_2005} such as repulsion \cite{d_angelo_repulsive_2021}. Different experts might also be trained on different data \cite{tsymbal_ensemble_2003}, and one might even take a boosting approach \cite{hastie_multi-class_2009}. It is very likely that these approaches would yield neural networks that outperform standard networks trained by updating the posterior distribution over tangent experts.

The ability in interpret single networks as ensembles opens many avenues for future research. These extend beyond continual learning; for example, one might be able to obtain a measure of uncertainty of the network output via the variance of the experts \cite{gal_dropout_2016}. We are hopeful that this insight will lead to deep learning systems that are better understood as their use expands within society.

\section*{Acknowledgements}

The authors thank Peter Koo and Ben Cowley for helpful early conversations and Tony Zador for providing a collaborative research environment. Additionally we would like to thank a grant from Schmidt Futures to CSHL for funding.

\section*{Code availability}

The code for all figures in this paper were written in Jax and are available at \texttt{https://github.com/ZadorLaboratory/NeuralTangentEnsemble.
}

\bibliography{main}
\bibliographystyle{plain}

\newpage
\section{Appendix}

\subsection{Proof of Theorem \ref{theorem:nte}}\label{app:theorem1}

\begin{proof}
We begin by noting that the change in weights we can be split up the sign and magnitude,
$$\Delta w_i=|\Delta w_i|\textrm{sign}(\Delta w_i).$$
We will then interpret $|\Delta w_i|$ as the unnormalized component weight. The remaining terms must be the component functions. 

To identify these functions, and show that they satisfy the properties of a probability distribution, we will rearrange terms. First, noting that $\sum_i|\Delta w_i|=z$ for some constant $z$ (potentially $z=1$),

\begin{align}
    p(y|x, W^{(t)}) &=  p(y|x, W^{(0)}) + \sum_i^N |\Delta w_i|\textrm{sign}(\Delta w_i)\frac{\partial }{\partial w_i^{(0)}}p(y|x,W^{(0)}) + \mathcal{O}(\|\Delta W\|^2)\\
    &=\sum_i^N \frac{|\Delta w_i|}{z}\underbrace{\left(p(y|x, W^{(0)}) +z\,\textrm{sign}(\Delta w_i)\frac{\partial  }{\partial w_i^{(0)}}p(y|x,W^{(0)})\right)}_{p(y|x,f_i)} + \mathcal{O}(\|\Delta W\|^2)\\
    &= \sum_i^N\frac{|\Delta w_i|}{z}p(y|x, W^{(0)})\left(1 +z\,\textrm{sign}(\Delta w_i)\frac{\partial  }{\partial w_i^{(0)}}\log p(y|x,W^{(0)})\right)+ \mathcal{O}(\|\Delta W\|^2)
\end{align}

We call the term $p(y|x,f_i)$ the \textit{\textbf{neural tangent expert}}.
 
The neural tangent expert provides a valid probability distribution for small $\Delta W$. First, see that it satisfies $\sum_jp(y_j|x,f_i)=1$. This can be seen from the fact that the right term inside $p(y|x,f_i)$ (the parentheses in the middle line) sums to 0 over the output label: 
\begin{align*}
    \sum_jp(y_j|x,f_i)&=\sum_jz\,\textrm{sign}(\Delta w_i)\frac{\partial  p(y_j|x,W^{(0)})}{\partial w_i^{(0)}}\\
    &=z\,\textrm{sign}(\Delta w_i)\sum_jp(y_j|x,W^{(0)})\frac{\partial  \log p(y_j|x,W^{(0)})}{\partial w_i^{(0)}}\\
    &=0
\end{align*}

Here we have used the identity that the expectation of a score function is zero.

Next, we will show that $1\geq p(y_j|x,f_i)\geq0$. First, since each $p(y_j|x,f_i)$ sum to 1 over $j$, no component can be greater than 1 if all components are nonnegative. Thus, it only needs to be shown that $p(y|x,f_i)\geq0$. While this cannot be guaranteed in general, by construction we have assumed that $zL<1$. This Lipschitz continuity bounds the L2 norm of the gradients of the log likelihood, which in turn bounds the L1 norm and implies that any \textit{individual} gradient has magnitude less than $\frac1z$:
\begin{align*}
    z\,\left|\frac{\partial  \log p(y_j|x,W^{(0)})}{\partial w_i^{(0)}}\right|&<1
\end{align*}

Thus, whether $\textrm{sign}(\Delta w_i)=1$ or $\textrm{sign}(\Delta w_i)=-1$, the term in parenthesis is nonnegative.
$$\left(1 +z\,\textrm{sign}(\Delta w_i)\frac{\partial  }{\partial w_i^{(0)}}\log p(y|x,W^{(0)})\right)>0.$$

\end{proof}

\subsection{Preventing component functions from changing by keeping the network close to initialization} \label{app:beta}

The continual learning ability of a Bayesian ensemble derives from learning to weight a set of \textit{fixed} functions. If these functions change over time, then there is no guarantee that the likelihood of each function at time $t$ still reflects the cumulative likelihood of past data under the current function. 

One good way to ensure this is does not occur is to ensure that the parameters change as little as possible from initialization. Although it is typical to measure this distance with $\|\Delta W\|_2$, we instead measure distance as the relative entropy of the change in parameters from the uniform perturbation, due to the simplicity of its result. These have the same minimum; remembering that $\|\Delta W\|_1=1$, by normalization, the smallest Euclidean distance $\|\Delta W\|_2$ will occur when all $\Delta w_i$ are equal.

To derive the maximum-entropy vector $|\Delta W|$ that is as similar as possible to $p(f_i|\mathcal{D})$, we will follow the steps of \cite{kivinen_exponentiated_1997}. A first step is to set the notion of similarity $L$ between $|\Delta W|$ and $p(f_i|\mathcal{D})$. We will then find the value that minimizes:
$$U(|\Delta W|)= -H[|\Delta W|]+\beta L(|\Delta W|,\{p(f_i|\mathcal{D})\}) + \gamma(\|\Delta W\|_1-1)$$

Here $\beta$ is a parameter that trades off between entropy and matching $p(f_i|\mathcal{D})$, and $\gamma$ is a Langrange multiplier that ensures the parameters remain normalized.

If one chooses to maximize the dot product $|\Delta W|^T p(f_i|\mathcal{D})$, one obtains the following relation:
$$w_i=\frac{e^{\beta \,p(f_i|\mathcal{D})}}{\sum_i e^{\beta \,p(f_i|\mathcal{D})}}$$

Alternatively, if one chooses to minimize the relative entropy $\textrm{KL}(|\Delta W|,p(f_i|\mathcal{D}))$, then one obtains
$$w_i=\frac{ p(f_i|\mathcal{D})^\beta}{\sum_i  p(f_i|\mathcal{D})^\beta}$$

We implement this second term. If the posterior likelihoods are maintained in log space, $\beta$ acts as a multiplicative scale upon the log data likelihood.

\subsection{Pseudocode for the NTE update rule using current gradients}

\begin{algorithm}
\caption{Neural Tangent Ensemble posterior update rule with current gradients}\label{alg:cap}
\begin{algorithmic}
\STATE Receive a dataset $\mathcal{D} = \{x_k, y_k\}_{k=1}^{N_t}$
\STATE Initialize a neural network with parameters $W^{(0)}$
\STATE Set learning rate $\eta$ and discount factor $0<\beta\leq1$
\STATE Perturb the network with some $\Delta W$ such that $\|\Delta W\|_1=z$
\FOR{each example $(x_k, y_k) \in \mathcal{D}$}
\FOR{each edge $w_i \in W$}
\STATE Compute the data likelihood for each expert $p(y_k|x_k,f_i)=\left(1 -\eta\;\textrm{sign}(\Delta w_i)\frac{\partial  }{\partial w_i^{(t)}}\ell_k^{(t)}\right)$
\STATE Update perturbation multiplicatively $\Delta w_i\gets \Delta w_i \,p(y_k|x_k,f_i)^\beta$
\ENDFOR
\STATE Renormalize perturbation such that $\sum_i |\Delta w_i|=z$
\STATE Optionally clip the change in parameters to prevent large changes, such that $|\Delta w_i|=1$
\ENDFOR
\end{algorithmic}
\end{algorithm}

\subsection{Experiment details}

All MNIST experiments were completed on two NVIDIA RTX 6000 cards, and all CIFAR100 experiments were conducted on NVIDIA H100 cards. Over 1,500 individual models were trained across all seeds and conditions, amounting to roughly 440 GPU-hours of compute time.

\subsubsection{Figure 2}

A single MLP was trained with 1,000 hidden units per layer and 2 hidden layers using ReLU nonlinearities. The model perturbed from initialization with a random normal vector with scale $0.001$, and then was trained with the NTE update rule (Algorithm \ref{alg:cap}) but using the Jacobian of the model at initialization. The batch size was 24 and the parameters of the NTE algorithm were $\eta=0.01$ and $\beta=1$.

\subsubsection{Figure 3}

We first created a Permuted MNIST task and code to measure the test accuracy on all tasks after training on each task sequentially. All reported results use 5 tasks.

We trained an MLP on this task with ReLU nonlinearities and 1,000 hidden units in 2 hidden layers. We used SGD with batch size 128, learning rate 0.01, and momentum swept from 0 to 1. The momentum buffer was not reset between tasks. We report the standard deviation of 10 random seeds.

\begin{figure}[h!]
    \centering
    \includegraphics[width=0.99\textwidth]{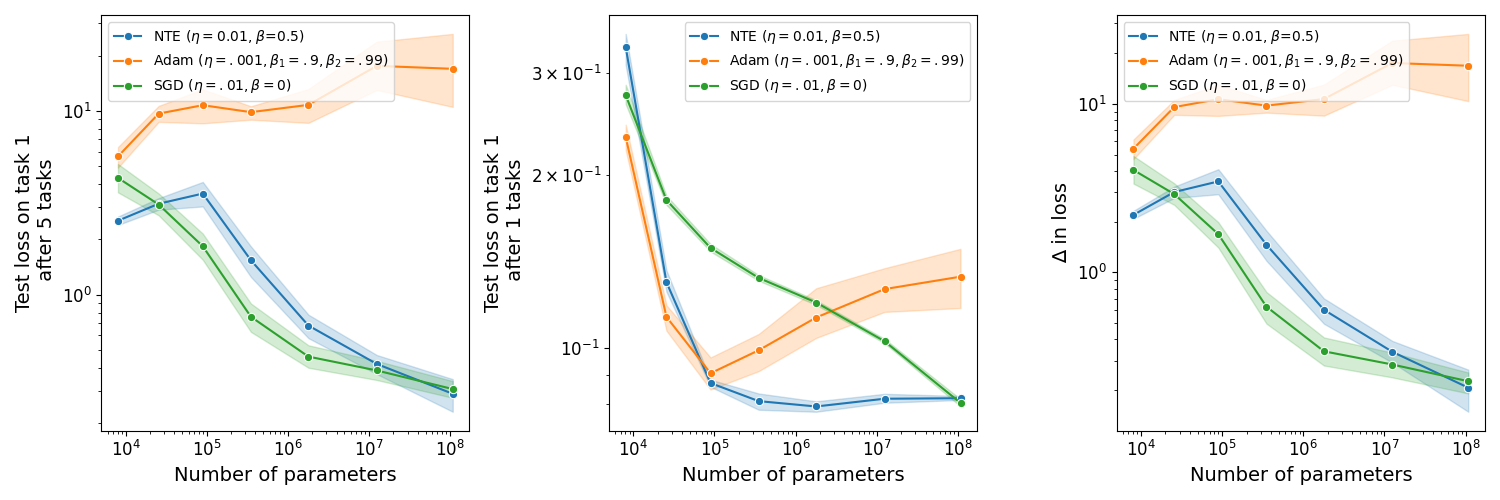}
    \caption{Loss curves for the task in Fig. 4. }
\end{figure}

\subsubsection{Figure 4}

Here, we used the same continual learning task as Figure 3, but swept the width of the two hidden layers from 10 to 10,000. All batch sizes were 128.

\newpage
\subsection{Additional experiments}

\begin{figure}[h]
    \centering
    \includegraphics[width=0.99\textwidth]{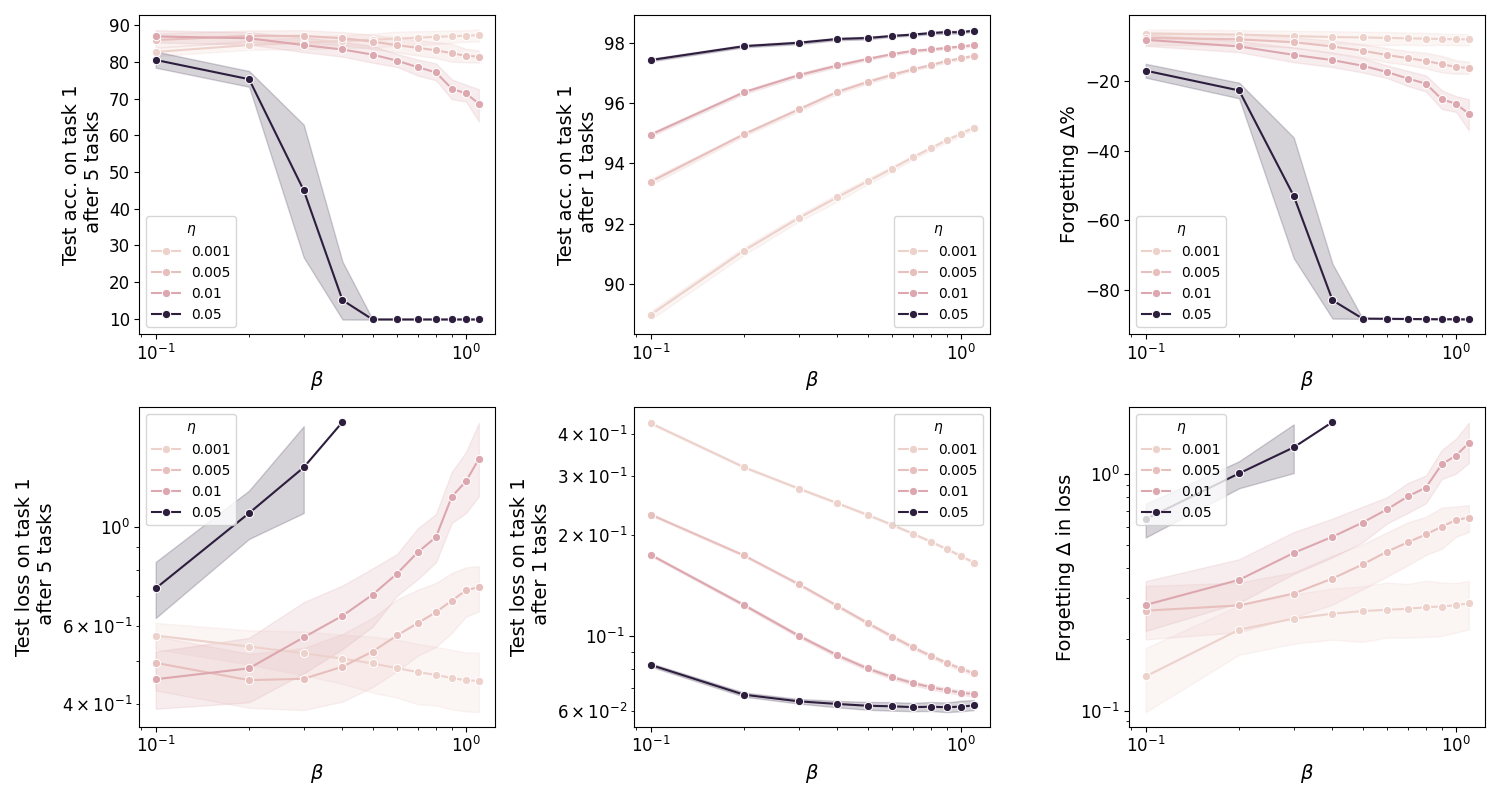}
    \caption{For the same task and architecture as the other figures (Permuted MNIST for 5 tasks with a ReLU MLP with two hidden layers and 1,000 hidden units each), we swept the parameters $\beta$ and $\eta$ in the Algorithm above. The accuracies (top) and losses (bottom) are shown for the first task after 5 total tasks (left), after immediately finishing that task before task switching (middle) and the difference between the two (right). Error bars show the standard deviation across seeds.}
    \label{fig:hyperparams}
\end{figure}

\begin{figure}[h]
    \centering
    \includegraphics[width=0.99\textwidth]{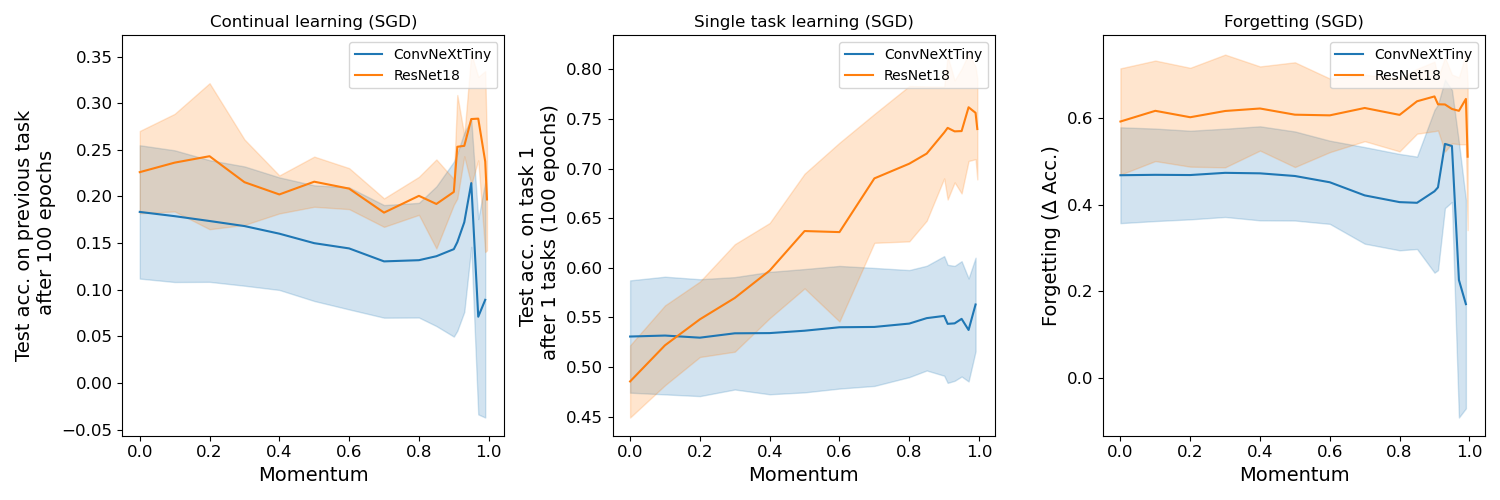}
    \caption{Effect of momentum in SGD for modern CNN architectures trained on the CIFAR-100 task-incremental task. In this task, models are trained on 10/100 classes at a time, and the softmax output layer is masked to only the active classes. Each model is trained for 100 epochs per task, and evaluated on all previous tasks. The two models shown are a ResNet18 and a ConvNeXtTiny. (left) The test set accuracy on the immediately previous task after learning the final task. (middle). The test set accuracy on the first task. (right) The difference between the two plots to the left.}
    \label{fig:cifar-momentum}
\end{figure}

\begin{figure}[h]
    \centering
    \includegraphics[width=0.99\textwidth]{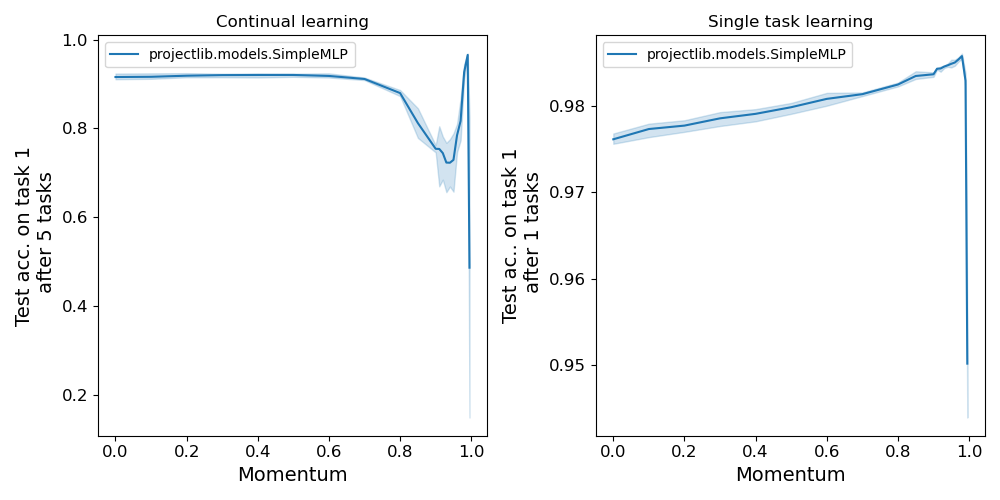}
    \caption{Identically to Fig. \ref{fig:SGD}, we trained a 2L MLP with 1,000 hidden units on the Permuted MNIST task and varied the momentum of SGD. This time, we reset the momemtum buffer between tasks. Interestingly, this introduces a nonmonotonic behavior and one can attain good performance with momentum near 0.99. }
    \label{fig:mnist_reset}
\end{figure}

\begin{figure}[h]
    \centering
    \includegraphics[width=0.99\textwidth]{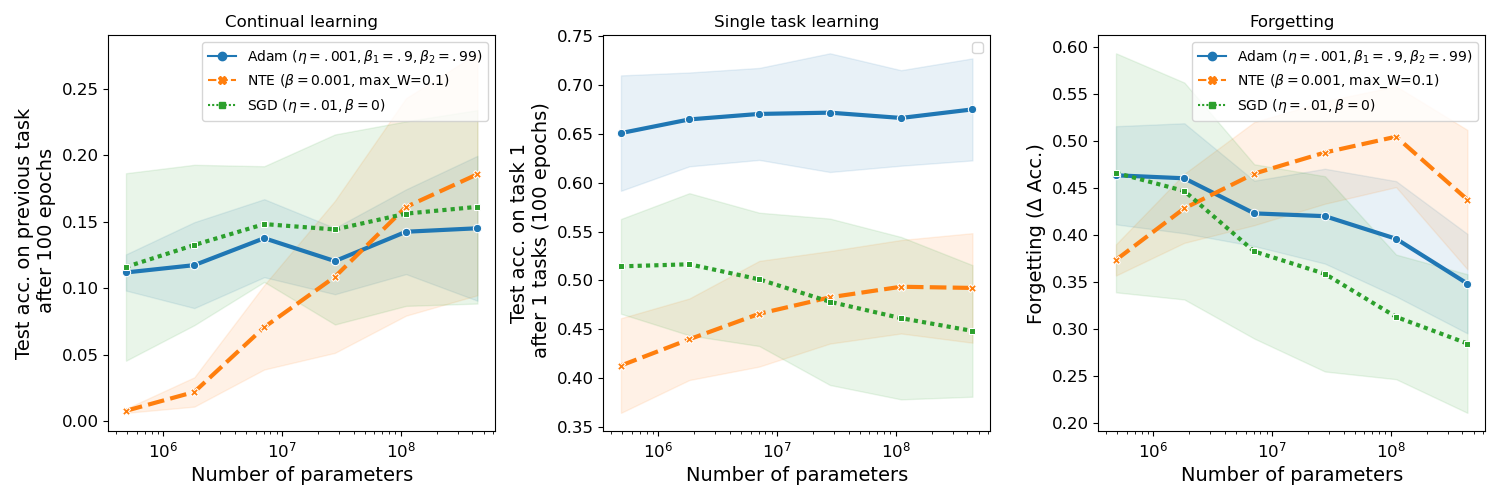}
    \caption{Performance with width for CIFAR-100. We scaled the number of convolutional filters in all layers of a ConvNeXtTiny by a constant factor, and then trained on the CIFAR-100 task-incremental task for each network. Subpanels represent identical information as Fig. \protect\ref{fig:cifar-momentum}.}
    \label{fig:cifar-width}
\end{figure}


\newpage
\section*{NeurIPS Paper Checklist}

\begin{enumerate}

\item {\bf Claims}
    \item[] Question: Do the main claims made in the abstract and introduction accurately reflect the paper's contributions and scope?
    \item[] Answer: \answerYes{} 
    \item[] Justification: The abstract and introduction accurately describe the contents of the paper.
    \item[] Guidelines:
    \begin{itemize}
        \item The answer NA means that the abstract and introduction do not include the claims made in the paper.
        \item The abstract and/or introduction should clearly state the claims made, including the contributions made in the paper and important assumptions and limitations. A No or NA answer to this question will not be perceived well by the reviewers. 
        \item The claims made should match theoretical and experimental results, and reflect how much the results can be expected to generalize to other settings. 
        \item It is fine to include aspirational goals as motivation as long as it is clear that these goals are not attained by the paper. 
    \end{itemize}

\item {\bf Limitations}
    \item[] Question: Does the paper discuss the limitations of the work performed by the authors?
    \item[] Answer: \answerYes{} 
    \item[] Justification: The main limitations of this work are 1) Theorem 1 is valid only in the limit of small parameter changes, and with the assumption of low network curvature. This assumption is stated clearly in the Theorem statement. Then, in the discussion, we mention that weighing experts by their posterior probability is only optimal when the experts are independent, which is not the case. We do not claim this is the optimal weighing strategy.
    \item[] Guidelines:
    \begin{itemize}
        \item The answer NA means that the paper has no limitation while the answer No means that the paper has limitations, but those are not discussed in the paper. 
        \item The authors are encouraged to create a separate "Limitations" section in their paper.
        \item The paper should point out any strong assumptions and how robust the results are to violations of these assumptions (e.g., independence assumptions, noiseless settings, model well-specification, asymptotic approximations only holding locally). The authors should reflect on how these assumptions might be violated in practice and what the implications would be.
        \item The authors should reflect on the scope of the claims made, e.g., if the approach was only tested on a few datasets or with a few runs. In general, empirical results often depend on implicit assumptions, which should be articulated.
        \item The authors should reflect on the factors that influence the performance of the approach. For example, a facial recognition algorithm may perform poorly when image resolution is low or images are taken in low lighting. Or a speech-to-text system might not be used reliably to provide closed captions for online lectures because it fails to handle technical jargon.
        \item The authors should discuss the computational efficiency of the proposed algorithms and how they scale with dataset size.
        \item If applicable, the authors should discuss possible limitations of their approach to address problems of privacy and fairness.
        \item While the authors might fear that complete honesty about limitations might be used by reviewers as grounds for rejection, a worse outcome might be that reviewers discover limitations that aren't acknowledged in the paper. The authors should use their best judgment and recognize that individual actions in favor of transparency play an important role in developing norms that preserve the integrity of the community. Reviewers will be specifically instructed to not penalize honesty concerning limitations.
    \end{itemize}

\item {\bf Theory Assumptions and Proofs}
    \item[] Question: For each theoretical result, does the paper provide the full set of assumptions and a complete (and correct) proof?
    \item[] Answer: \answerYes{} 
    \item[] Justification: All assumptions are listed upfront, and the proofs are complete.
    \item[] Guidelines:
    \begin{itemize}
        \item The answer NA means that the paper does not include theoretical results. 
        \item All the theorems, formulas, and proofs in the paper should be numbered and cross-referenced.
        \item All assumptions should be clearly stated or referenced in the statement of any theorems.
        \item The proofs can either appear in the main paper or the supplemental material, but if they appear in the supplemental material, the authors are encouraged to provide a short proof sketch to provide intuition. 
        \item Inversely, any informal proof provided in the core of the paper should be complemented by formal proofs provided in appendix or supplemental material.
        \item Theorems and Lemmas that the proof relies upon should be properly referenced. 
    \end{itemize}

    \item {\bf Experimental Result Reproducibility}
    \item[] Question: Does the paper fully disclose all the information needed to reproduce the main experimental results of the paper to the extent that it affects the main claims and/or conclusions of the paper (regardless of whether the code and data are provided or not)?
    \item[] Answer: \answerYes{} 
    \item[] Justification: The experiment details are listed in full. Code is also provided that implements all models and optimizers.
    \item[] Guidelines:
    \begin{itemize}
        \item The answer NA means that the paper does not include experiments.
        \item If the paper includes experiments, a No answer to this question will not be perceived well by the reviewers: Making the paper reproducible is important, regardless of whether the code and data are provided or not.
        \item If the contribution is a dataset and/or model, the authors should describe the steps taken to make their results reproducible or verifiable. 
        \item Depending on the contribution, reproducibility can be accomplished in various ways. For example, if the contribution is a novel architecture, describing the architecture fully might suffice, or if the contribution is a specific model and empirical evaluation, it may be necessary to either make it possible for others to replicate the model with the same dataset, or provide access to the model. In general. releasing code and data is often one good way to accomplish this, but reproducibility can also be provided via detailed instructions for how to replicate the results, access to a hosted model (e.g., in the case of a large language model), releasing of a model checkpoint, or other means that are appropriate to the research performed.
        \item While NeurIPS does not require releasing code, the conference does require all submissions to provide some reasonable avenue for reproducibility, which may depend on the nature of the contribution. For example
        \begin{enumerate}
            \item If the contribution is primarily a new algorithm, the paper should make it clear how to reproduce that algorithm.
            \item If the contribution is primarily a new model architecture, the paper should describe the architecture clearly and fully.
            \item If the contribution is a new model (e.g., a large language model), then there should either be a way to access this model for reproducing the results or a way to reproduce the model (e.g., with an open-source dataset or instructions for how to construct the dataset).
            \item We recognize that reproducibility may be tricky in some cases, in which case authors are welcome to describe the particular way they provide for reproducibility. In the case of closed-source models, it may be that access to the model is limited in some way (e.g., to registered users), but it should be possible for other researchers to have some path to reproducing or verifying the results.
        \end{enumerate}
    \end{itemize}

\item {\bf Open access to data and code}
    \item[] Question: Does the paper provide open access to the data and code, with sufficient instructions to faithfully reproduce the main experimental results, as described in supplemental material?
    \item[] Answer: \answerYes{} 
    \item[] Justification: Code is provided in supplementary information.
    \item[] Guidelines:
    \begin{itemize}
        \item The answer NA means that paper does not include experiments requiring code.
        \item Please see the NeurIPS code and data submission guidelines (\url{https://nips.cc/public/guides/CodeSubmissionPolicy}) for more details.
        \item While we encourage the release of code and data, we understand that this might not be possible, so “No” is an acceptable answer. Papers cannot be rejected simply for not including code, unless this is central to the contribution (e.g., for a new open-source benchmark).
        \item The instructions should contain the exact command and environment needed to run to reproduce the results. See the NeurIPS code and data submission guidelines (\url{https://nips.cc/public/guides/CodeSubmissionPolicy}) for more details.
        \item The authors should provide instructions on data access and preparation, including how to access the raw data, preprocessed data, intermediate data, and generated data, etc.
        \item The authors should provide scripts to reproduce all experimental results for the new proposed method and baselines. If only a subset of experiments are reproducible, they should state which ones are omitted from the script and why.
        \item At submission time, to preserve anonymity, the authors should release anonymized versions (if applicable).
        \item Providing as much information as possible in supplemental material (appended to the paper) is recommended, but including URLs to data and code is permitted.
    \end{itemize}

\item {\bf Experimental Setting/Details}
    \item[] Question: Does the paper specify all the training and test details (e.g., data splits, hyperparameters, how they were chosen, type of optimizer, etc.) necessary to understand the results?
    \item[] Answer: \answerYes{} 
    \item[] Justification: All experimental details are provided.
    \item[] Guidelines:
    \begin{itemize}
        \item The answer NA means that the paper does not include experiments.
        \item The experimental setting should be presented in the core of the paper to a level of detail that is necessary to appreciate the results and make sense of them.
        \item The full details can be provided either with the code, in appendix, or as supplemental material.
    \end{itemize}

\item {\bf Experiment Statistical Significance}
    \item[] Question: Does the paper report error bars suitably and correctly defined or other appropriate information about the statistical significance of the experiments?
    \item[] Answer: \answerYes{} 
    \item[] Justification: Error bars where present are clearly described as representing the standard deviation over seeds.
    \item[] Guidelines:
    \begin{itemize}
        \item The answer NA means that the paper does not include experiments.
        \item The authors should answer "Yes" if the results are accompanied by error bars, confidence intervals, or statistical significance tests, at least for the experiments that support the main claims of the paper.
        \item The factors of variability that the error bars are capturing should be clearly stated (for example, train/test split, initialization, random drawing of some parameter, or overall run with given experimental conditions).
        \item The method for calculating the error bars should be explained (closed form formula, call to a library function, bootstrap, etc.)
        \item The assumptions made should be given (e.g., Normally distributed errors).
        \item It should be clear whether the error bar is the standard deviation or the standard error of the mean.
        \item It is OK to report 1-sigma error bars, but one should state it. The authors should preferably report a 2-sigma error bar than state that they have a 96\% CI, if the hypothesis of Normality of errors is not verified.
        \item For asymmetric distributions, the authors should be careful not to show in tables or figures symmetric error bars that would yield results that are out of range (e.g. negative error rates).
        \item If error bars are reported in tables or plots, The authors should explain in the text how they were calculated and reference the corresponding figures or tables in the text.
    \end{itemize}

\item {\bf Experiments Compute Resources}
    \item[] Question: For each experiment, does the paper provide sufficient information on the computer resources (type of compute workers, memory, time of execution) needed to reproduce the experiments?
    \item[] Answer: \answerYes{} 
    \item[] Justification: In the Appendix, we describe the type of GPU used and approximate number of GPU hours used for experiments.
    \item[] Guidelines:
    \begin{itemize}
        \item The answer NA means that the paper does not include experiments.
        \item The paper should indicate the type of compute workers CPU or GPU, internal cluster, or cloud provider, including relevant memory and storage.
        \item The paper should provide the amount of compute required for each of the individual experimental runs as well as estimate the total compute. 
        \item The paper should disclose whether the full research project required more compute than the experiments reported in the paper (e.g., preliminary or failed experiments that didn't make it into the paper). 
    \end{itemize}
    
\item {\bf Code Of Ethics}
    \item[] Question: Does the research conducted in the paper conform, in every respect, with the NeurIPS Code of Ethics \url{https://neurips.cc/public/EthicsGuidelines}?
    \item[] Answer: \answerYes{} 
    \item[] Justification: All authors have reviewed and agreed to the Code of Ethics, and affirm that this submission adheres.
    \item[] Guidelines:
    \begin{itemize}
        \item The answer NA means that the authors have not reviewed the NeurIPS Code of Ethics.
        \item If the authors answer No, they should explain the special circumstances that require a deviation from the Code of Ethics.
        \item The authors should make sure to preserve anonymity (e.g., if there is a special consideration due to laws or regulations in their jurisdiction).
    \end{itemize}

\item {\bf Broader Impacts}
    \item[] Question: Does the paper discuss both potential positive societal impacts and negative societal impacts of the work performed?
    \item[] Answer: \answerYes{} 
    \item[] Justification: Although this theoretical work has little direct impact, we mention in the discussion the impacts of continual learning devices upon society.
    \item[] Guidelines:
    \begin{itemize}
        \item The answer NA means that there is no societal impact of the work performed.
        \item If the authors answer NA or No, they should explain why their work has no societal impact or why the paper does not address societal impact.
        \item Examples of negative societal impacts include potential malicious or unintended uses (e.g., disinformation, generating fake profiles, surveillance), fairness considerations (e.g., deployment of technologies that could make decisions that unfairly impact specific groups), privacy considerations, and security considerations.
        \item The conference expects that many papers will be foundational research and not tied to particular applications, let alone deployments. However, if there is a direct path to any negative applications, the authors should point it out. For example, it is legitimate to point out that an improvement in the quality of generative models could be used to generate deepfakes for disinformation. On the other hand, it is not needed to point out that a generic algorithm for optimizing neural networks could enable people to train models that generate Deepfakes faster.
        \item The authors should consider possible harms that could arise when the technology is being used as intended and functioning correctly, harms that could arise when the technology is being used as intended but gives incorrect results, and harms following from (intentional or unintentional) misuse of the technology.
        \item If there are negative societal impacts, the authors could also discuss possible mitigation strategies (e.g., gated release of models, providing defenses in addition to attacks, mechanisms for monitoring misuse, mechanisms to monitor how a system learns from feedback over time, improving the efficiency and accessibility of ML).
    \end{itemize}
    
\item {\bf Safeguards}
    \item[] Question: Does the paper describe safeguards that have been put in place for responsible release of data or models that have a high risk for misuse (e.g., pretrained language models, image generators, or scraped datasets)?
    \item[] Answer: \answerNA{} 
    \item[] Justification: This model describes no data or models that have a high risk of misuse.
    \item[] Guidelines:
    \begin{itemize}
        \item The answer NA means that the paper poses no such risks.
        \item Released models that have a high risk for misuse or dual-use should be released with necessary safeguards to allow for controlled use of the model, for example by requiring that users adhere to usage guidelines or restrictions to access the model or implementing safety filters. 
        \item Datasets that have been scraped from the Internet could pose safety risks. The authors should describe how they avoided releasing unsafe images.
        \item We recognize that providing effective safeguards is challenging, and many papers do not require this, but we encourage authors to take this into account and make a best faith effort.
    \end{itemize}

\item {\bf Licenses for existing assets}
    \item[] Question: Are the creators or original owners of assets (e.g., code, data, models), used in the paper, properly credited and are the license and terms of use explicitly mentioned and properly respected?
    \item[] Answer: \answerYes{} 
    \item[] Justification: All re-used code and libraries are used according to their terms of use.
    \item[] Guidelines:
    \begin{itemize}
        \item The answer NA means that the paper does not use existing assets.
        \item The authors should cite the original paper that produced the code package or dataset.
        \item The authors should state which version of the asset is used and, if possible, include a URL.
        \item The name of the license (e.g., CC-BY 4.0) should be included for each asset.
        \item For scraped data from a particular source (e.g., website), the copyright and terms of service of that source should be provided.
        \item If assets are released, the license, copyright information, and terms of use in the package should be provided. For popular datasets, \url{paperswithcode.com/datasets} has curated licenses for some datasets. Their licensing guide can help determine the license of a dataset.
        \item For existing datasets that are re-packaged, both the original license and the license of the derived asset (if it has changed) should be provided.
        \item If this information is not available online, the authors are encouraged to reach out to the asset's creators.
    \end{itemize}

\item {\bf New Assets}
    \item[] Question: Are new assets introduced in the paper well documented and is the documentation provided alongside the assets?
    \item[] Answer: \answerNA{} 
    \item[] Justification: No new assets are released.
    \item[] Guidelines:
    \begin{itemize}
        \item The answer NA means that the paper does not release new assets.
        \item Researchers should communicate the details of the dataset/code/model as part of their submissions via structured templates. This includes details about training, license, limitations, etc. 
        \item The paper should discuss whether and how consent was obtained from people whose asset is used.
        \item At submission time, remember to anonymize your assets (if applicable). You can either create an anonymized URL or include an anonymized zip file.
    \end{itemize}

\item {\bf Crowdsourcing and Research with Human Subjects}
    \item[] Question: For crowdsourcing experiments and research with human subjects, does the paper include the full text of instructions given to participants and screenshots, if applicable, as well as details about compensation (if any)? 
    \item[] Answer: \answerNA{} 
    \item[] Justification: No crowdsourcing or research with human subjects is reported here.
    \item[] Guidelines:
    \begin{itemize}
        \item The answer NA means that the paper does not involve crowdsourcing nor research with human subjects.
        \item Including this information in the supplemental material is fine, but if the main contribution of the paper involves human subjects, then as much detail as possible should be included in the main paper. 
        \item According to the NeurIPS Code of Ethics, workers involved in data collection, curation, or other labor should be paid at least the minimum wage in the country of the data collector. 
    \end{itemize}

\item {\bf Institutional Review Board (IRB) Approvals or Equivalent for Research with Human Subjects}
    \item[] Question: Does the paper describe potential risks incurred by study participants, whether such risks were disclosed to the subjects, and whether Institutional Review Board (IRB) approvals (or an equivalent approval/review based on the requirements of your country or institution) were obtained?
    \item[] Answer: \answerNA{} 
    \item[] Justification: This paper does not involve human subjects.
    \item[] Guidelines:
    \begin{itemize}
        \item The answer NA means that the paper does not involve crowdsourcing nor research with human subjects.
        \item Depending on the country in which research is conducted, IRB approval (or equivalent) may be required for any human subjects research. If you obtained IRB approval, you should clearly state this in the paper. 
        \item We recognize that the procedures for this may vary significantly between institutions and locations, and we expect authors to adhere to the NeurIPS Code of Ethics and the guidelines for their institution. 
        \item For initial submissions, do not include any information that would break anonymity (if applicable), such as the institution conducting the review.
    \end{itemize}

\end{enumerate}

\end{document}